\documentclass[letterpaper]{article} 
\usepackage[preprint]{aaai2027}    
\usepackage[hyphens]{url}  
\usepackage{graphicx}      
\usepackage{natbib}        
\usepackage{caption}       
\usepackage{amsmath}       
\usepackage{amssymb}       
\usepackage{mleftright}   
\usepackage{amsthm}                      
\newtheorem{proposition}{Proposition}    
\usepackage{algorithm}
\usepackage{algorithmic}

\usepackage{newfloat}
\usepackage{listings}
\DeclareCaptionStyle{ruled}{labelfont=normalfont,labelsep=colon,strut=off} 
\floatstyle{ruled}
\newfloat{listing}{tb}{lst}{}
\floatname{listing}{Listing}
\usepackage{subcaption}

\usepackage{placeins}
\usepackage{booktabs}
\usepackage{colortbl}
\definecolor{rankfirst}{gray}{0.75}
\definecolor{ranksecond}{gray}{0.85}
\definecolor{rankthird}{gray}{0.93}
\newcommand{\firstscore}[1]{\cellcolor{rankfirst}\textbf{#1}}
\newcommand{\secondscore}[1]{\cellcolor{ranksecond}\underline{#1}}
\newcommand{\thirdscore}[1]{\cellcolor{rankthird}#1}
\title{DASH: Divergence-Adaptive Supervision Horizons for On-Policy Self-Distillation of Reasoning Models}

\author{
    ZhiYan Hou\equalcontrib\textsuperscript{\rm 1,\rm 2,\rm 3},
    Xinyu Tang\equalcontrib\textsuperscript{\rm 2,\rm 3},
    Hongyan An\textsuperscript{\rm 1,\rm 2,\rm 3},
    Jianjin Zhang\textsuperscript{\rm 2,\rm 3},\\
    Weizhen Wang\textsuperscript{\rm 2,\rm 3},
    Yunyun Han\textsuperscript{\rm 2,\rm 3},
    Gengsheng Li\textsuperscript{\rm 1},
    Xiangzhao Hao\textsuperscript{\rm 1},\\
    Haiyun Guo\textsuperscript{\rm 1,\rm 4},
    Wenbin Hu\textsuperscript{\rm 6,\rm \dag},
    Jinqiao Wang\textsuperscript{\rm 1,\rm 4,\rm 5},
    Yafeng Deng\textsuperscript{\rm 2,\rm 3,\rm \dag}
}
\affiliations{
    \textsuperscript{\rm 1}Institute of Automation, Chinese Academy of Sciences\\
    \textsuperscript{\rm 2}EverMind\\
    \textsuperscript{\rm 3}Shanda Group\\
    \textsuperscript{\rm 4}University of Chinese Academy of Sciences\\
    \textsuperscript{\rm 5}Wuhan AI Research\\
    \textsuperscript{\rm 6}Wuhan University\\
    \textsuperscript{\dag}Corresponding authors.
}

\begin{document}

\maketitle

\begin{abstract}

Reinforcement learning with verifiable rewards (RLVR) improves the reasoning capabilities of large language models using automatically verifiable outcome signals, but these signals are typically sparse and at the sequence-level. On-policy self-distillation (OPSD) mitigates this sparsity by querying a privileged teacher at student-visited prefixes and providing dense token-level distributional supervision. Although this dense supervision alleviates signal sparsity, we find that standard OPSD still underexploits the temporal structure of the rollout. It assigns every local divergence the same coefficient, regardless of its position or the divergence sequence in which it occurs. In on-policy autoregressive generation, the same divergence magnitude can follow different discrepancy histories, reflecting different evolutions of the mismatch between the teacher and student. Since the local scalar alone cannot distinguish these temporal contexts, standard OPSD cannot adapt its token-level weights to the realized discrepancy sequence. To address this limitation, we propose \emph{Divergence-Adaptive Supervision Horizons} (DASH). DASH maps the gap between each local distillation signal and the sequence-level mean to an adaptive propagation gate and then uses these gates to control backward multi-step aggregation. By doing so, DASH adjusts token-level supervision weights according to how local divergences evolve during generation. Experiments on three mathematical reasoning benchmarks across three model scales show that DASH improves over our matched vanilla OPSD reruns on every benchmark at all three scales. DASH reuses the teacher and student distributions that OPSD already computes, so the gains require no additional teacher or student forward pass.

\end{abstract}

\begin{links}
    \link{Code}{https://github.com/DBtxy/DASH-OPSD}
\end{links}


\section{Introduction}
Reinforcement learning with verifiable rewards (RLVR) has become an important post-training paradigm for improving the mathematical reasoning and code generation capabilities of large language models~\cite{lambert2024tulu3,shao2024deepseekmath,guo2025deepseekr1}. It optimizes models using automatically verifiable signals, such as answer correctness and program execution results~\cite{cobbe2021gsm8k,hendrycks2021math,chen2021codex}, avoiding costly human preference annotations~\cite{ouyang2022instructgpt,rafailov2023dpo} and scales effectively~\cite{yu2025dapo}.
However, standard RLVR typically applies the same sparse sequence-level outcome reward to all token-level decisions in a response, making it difficult to distinguish their individual contributions to the final outcome~\cite{uesato2022solving,lightman2024verify}.This temporal credit assignment challenge limits both sample efficiency and optimization stability in long-horizon reasoning~\cite{sutton1988td,kazemnejad2025vineppo,parthasarathi2025grpolambda}.

To mitigate this coarse credit assignment, recent work has introduced on-policy distillation (OPD) and on-policy self-distillation (OPSD)~\cite{agarwal2024onpolicy,gu2024minillm,zhao2026selfdistilled}, which provide dense token-level supervision on student-generated trajectories~\cite{hinton2015distilling,zelikman2022star}. In OPSD, the student generates a response, while the same model, conditioned on a reference answer or other privileged information~\cite{vapnik2009lupi,lopezpaz2016privileged}, acts as the teacher and provides distributional supervision at the student-visited prefixes.


\begin{figure*}[t]
    \centering
    \begin{subfigure}[t]{0.32\textwidth}
        \centering
        \includegraphics[width=\linewidth]{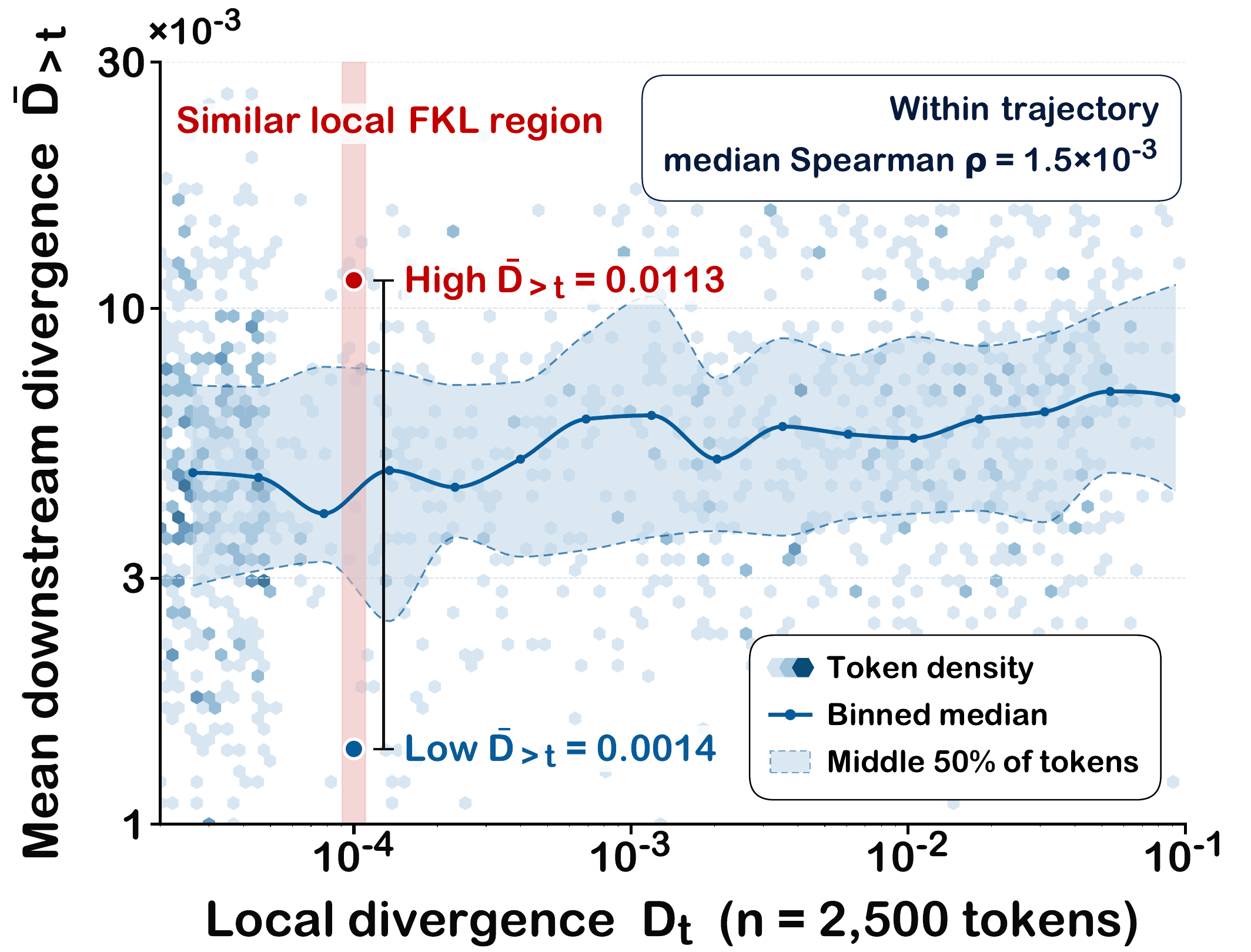}
        \caption{Weak local-to-future association.}
        \label{fig:motivation_spearman}
    \end{subfigure}
    \hfill
    \begin{subfigure}[t]{0.34\textwidth}
        \centering
        \includegraphics[width=\linewidth]{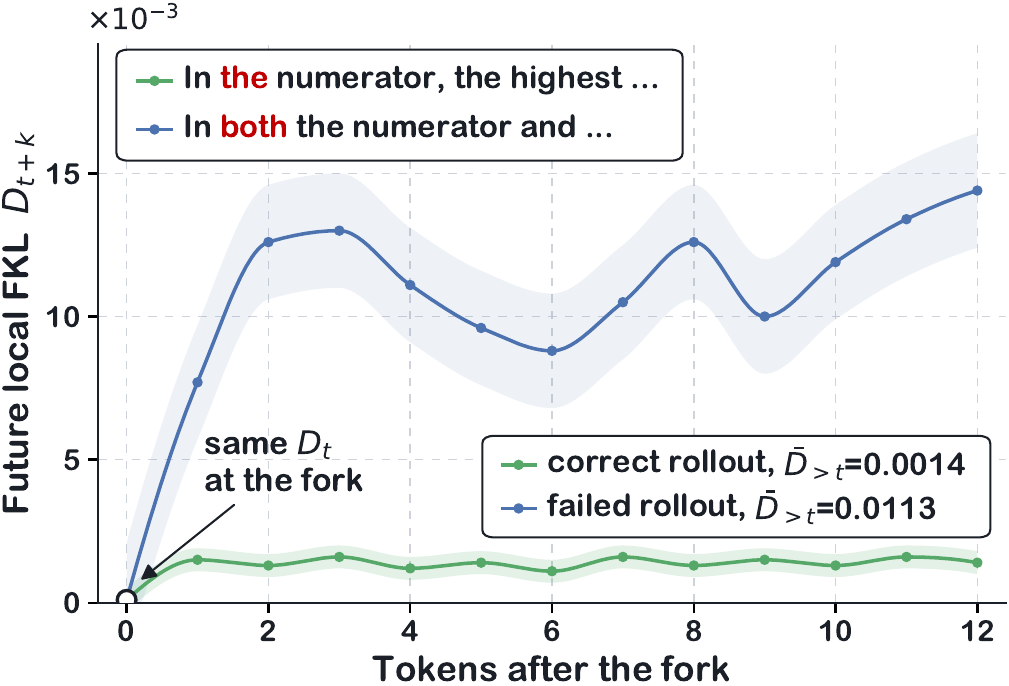}
        \caption{Matched local divergence, divergent futures.}
        \label{fig:motivation_futures}
    \end{subfigure}
    \hfill
    \begin{subfigure}[t]{0.32\textwidth}
        \centering
        \includegraphics[width=\linewidth]{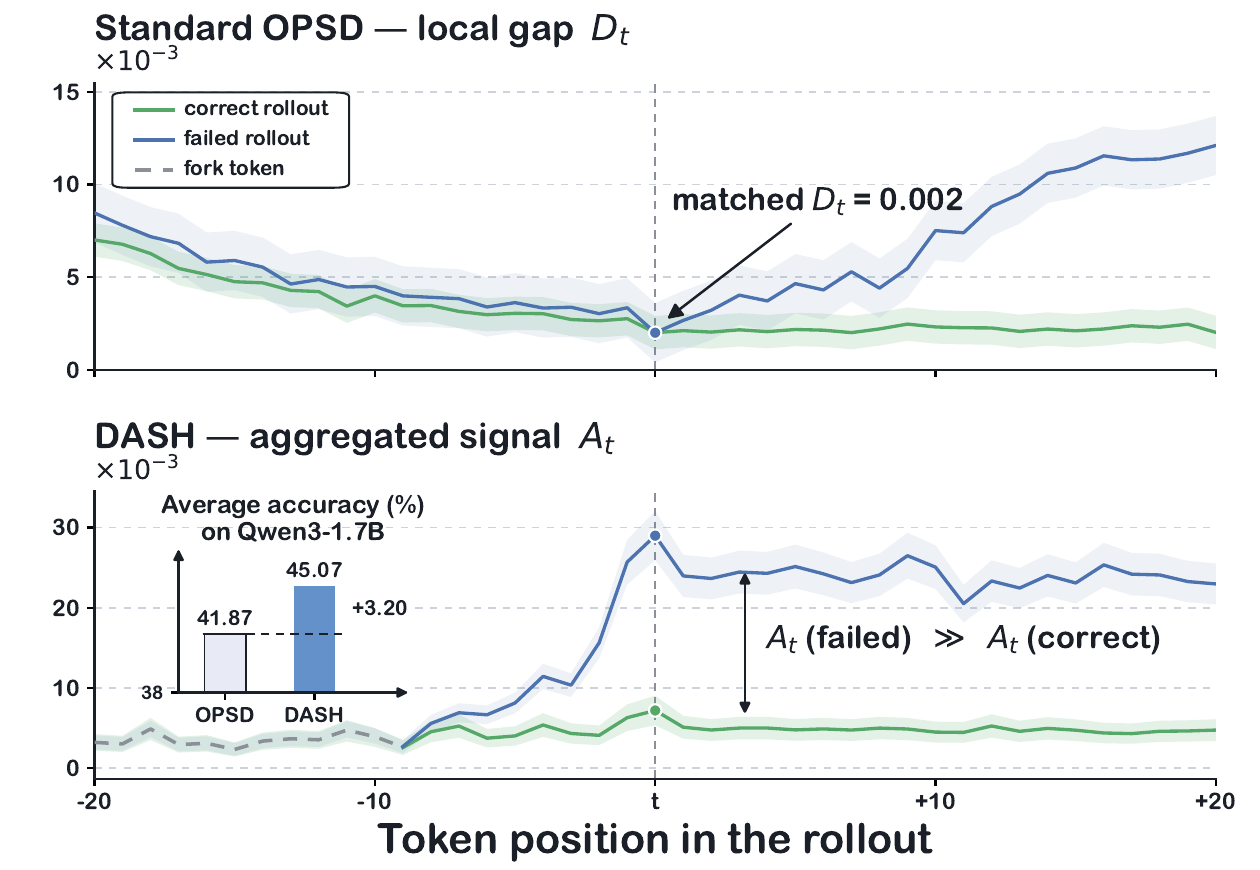}
        \caption{DASH: Sequence-aware signal allocation.}
        \label{fig:motivation_mechanism}
    \end{subfigure}

    \caption{
    \textbf{Motivation and mechanism of DASH.}
    (a) The current local divergence $D_t$ is weakly associated
    with the mean future divergence $\bar{D}_{>t}$ within
    student-generated trajectories. The plot visualizes 2,500
    of the 30k analyzed tokens.
    (b) Two rollouts matched by local divergence at a reasoning fork subsequently exhibit sharply different divergence profiles and terminal outcomes.
    (c) Vanilla OPSD aggregates the matched local divergences with the same coefficient, regardless of temporal discrepancy profiles. DASH instead uses the realized discrepancy sequence to produce path-dependent aggregated supervision signals.}
    \label{fig:motivation}
\end{figure*}


However, dense token-level supervision primarily alleviates signal sparsity, while leaving the temporal structure of the rollout underexploited. This is because vanilla OPSD aggregates position-wise distributional discrepancies using uniform coefficients that are independent of both token position and the temporal discrepancy profile along the rollout. But in on-policy autoregressive generation~\cite{ross2011dagger,bengio2015scheduled}, the same local discrepancy value can arise after different discrepancy histories that reflect how the mismatch between teacher and student has evolved over student-visited prefixes. The local scalar alone cannot distinguish these temporal contexts, yet vanilla OPSD assigns the same explicit aggregation coefficient in every case. Consequently, vanilla OPSD cannot adapt token-level distillation weights to the temporal evolution of the discrepancy sequence.

To make coefficient allocation sensitive to the temporal discrepancy profile, we propose \emph{Divergence-Adaptive Supervision Horizons} (DASH), a divergence-adaptive multi-step aggregation method for OPSD. At each position, DASH converts the gap between the local distillation signal and its sequence-level mean into an adaptive propagation gate $\lambda_t$. These gates control a backward multi-step recursion, yielding a dynamic coefficient for each local loss that is determined by its preceding gate path. DASH thus replaces the uniform coefficient profile of vanilla OPSD with a path-dependent sequence-level objective whose effective supervision horizon adapts to the realized discrepancy sequence.

Our contributions are threefold. \textbf{Firstly}, we identify a temporal coefficient allocation gap in vanilla OPSD: assigning the same weight to every local divergence makes the objective invariant to different temporal arrangements of the same divergence values, preventing supervision weights from adapting to temporal structure (Section~\ref{sec:order_analysis}). \textbf{Secondly}, we propose \emph{Divergence-Adaptive Supervision Horizons} (DASH), which uses the gap between each local divergence and the sequence-level mean to adapt the effective supervision horizon and produces temporally conditioned token-level weights through multi-step aggregation, enabling dense distillation supervision to exploit temporal structure (Section~\ref{sec:dash}). \textbf{Thirdly}, with negligible additional training overhead, DASH obtains the highest overall scores among the compared results across three mathematical reasoning benchmarks and three model scales (Table~\ref{tab:main_results}), supported by comprehensive ablation studies (Sections~\ref{subsec:component_ablation} and~\ref{subsec:parameter_ablations}).



\section{Related Work}

\subsection{Reinforcement Learning with Verifiable Rewards}

Reinforcement learning with verifiable rewards (RLVR) improves language-model reasoning using automatically checked signals, such as answer matching, program execution, and unit tests~\cite{lambert2024tulu3,cobbe2021gsm8k,chen2021codex}. DeepSeekMath introduced Group Relative Policy Optimization (GRPO), while DeepSeek-R1 demonstrated the effectiveness of large-scale RLVR for mathematical and coding reasoning~\cite{shao2024deepseekmath,guo2025deepseekr1}. Standard RLVR nevertheless relies primarily on sparse sequence-level rewards~\cite{schulman2017ppo,yu2025dapo,yue2025rlvr}, providing limited direct supervision for intermediate decisions in long reasoning trajectories~\cite{wei2022cot,uesato2022solving}. Process supervision introduces step-level feedback through process reward models~\cite{lightman2024verify,uesato2022solving,wang2024mathshepherd,setlur2025pav}, whereas temporal credit-assignment methods such as GRPO-$\lambda$ redistribute outcome-level signals across sampled decisions using eligibility traces and critic-free temporal-difference estimates~\cite{parthasarathi2025grpolambda,sutton1988td,schulman2016gae,kazemnejad2025vineppo}. These approaches address learning from sparse reward signals; Our work investigates how dense, differentiable token-level supervision in OPSD should be allocated across a rollout.

\subsection{On-Policy Self-Distillation}

On-policy self-distillation (OPSD) trains a model on its own rollouts by using the same model under different contexts as an unprivileged student and a privileged teacher~\cite{agarwal2024onpolicy,gu2024minillm,zelikman2022star}. The teacher conditions on reference solutions or other privileged information and provides token-level distributional supervision at student-visited prefixes~\cite{zhao2026selfdistilled,vapnik2009lupi,lopezpaz2016privileged,pinto2018asymmetric,weihs2021imitation}. Recent OPSD methods refine either privileged supervision or token weighting~\cite{jin2026eopd,li2026phf,xie2026iwopd,lin2026renio,shen2026purified,liu2026teacherhelp}. AVSD combines cross-view consensus with view-specific signals from multiple privileged teacher contexts~\cite{nguyen2026avsd}, while PW-OPSD applies fixed position-dependent weights motivated by variation in teacher-token reliability~\cite{liu2026pwopsd}. These methods improve the supervision source or impose predefined positional weighting. DASH instead focuses on how local distillation signals are aggregated across the rollout. It uses sequence-relative divergence gaps to construct adaptive propagation gates and temporally conditioned token-level coefficients through multi-step aggregation, allowing dense supervision to exploit the temporal evolution of local discrepancies.

\section{Preliminaries}
\label{sec:preliminaries}

Let $\pi_\theta$ denote the student policy. Given a problem $x$, the student samples a response $\mathbf y=(y_1,\ldots,y_T)\sim P_\theta(\cdot\mid x)$. We denote the student-generated prefix at position $t$ by $s_t=(x,y_{<t})$.

On-policy self-distillation (OPSD)~\cite{agarwal2024onpolicy,zhao2026selfdistilled} uses a single model as both the student and a privileged teacher under different conditioning contexts. The student generates the rollout from the problem $x$, while the teacher additionally conditions on privileged information $z$, such as a reference solution. At each student-visited prefix $s_t$, their distributions are

\begin{equation}
    \pi_t^{\mathrm{S}}
    =
    \pi_\theta(\cdot\mid s_t),
    \qquad
    \pi_t^{\mathrm{T}}
    =
    \operatorname{sg}
    \left[
        \pi_{\bar\theta}(\cdot\mid s_t,z)
    \right],
    \label{eq:student_teacher_distributions}
\end{equation}

where $\bar\theta$ denotes the teacher parameters and $\operatorname{sg}[\cdot]$ blocks gradients through the teacher target. Querying the privileged teacher along the student rollout provides dense token-level distributional supervision~\cite{agarwal2024onpolicy,liu2026teacherhelp}.

Our main setting uses the forward KL divergence from the privileged teacher to the student as the local distillation loss~\cite{hinton2015distilling,kim2016seqkd}:

\begin{equation}
    \begin{split}
        d_t
        &=
        D_{\mathrm{KL}}
        \left(
            \pi_t^T
            \,\middle\|\,
            \pi_t^S
        \right) \\
        &=
        \sum_{v\in\mathcal A}
        \pi_t^T(v)
        \log
        \frac{\pi_t^T(v)}{\pi_t^S(v)}.
    \end{split}
    \label{eq:local_distillation_loss}
\end{equation}
\begin{figure*}[t]
    \centering
    \includegraphics[width=0.98\textwidth]
    {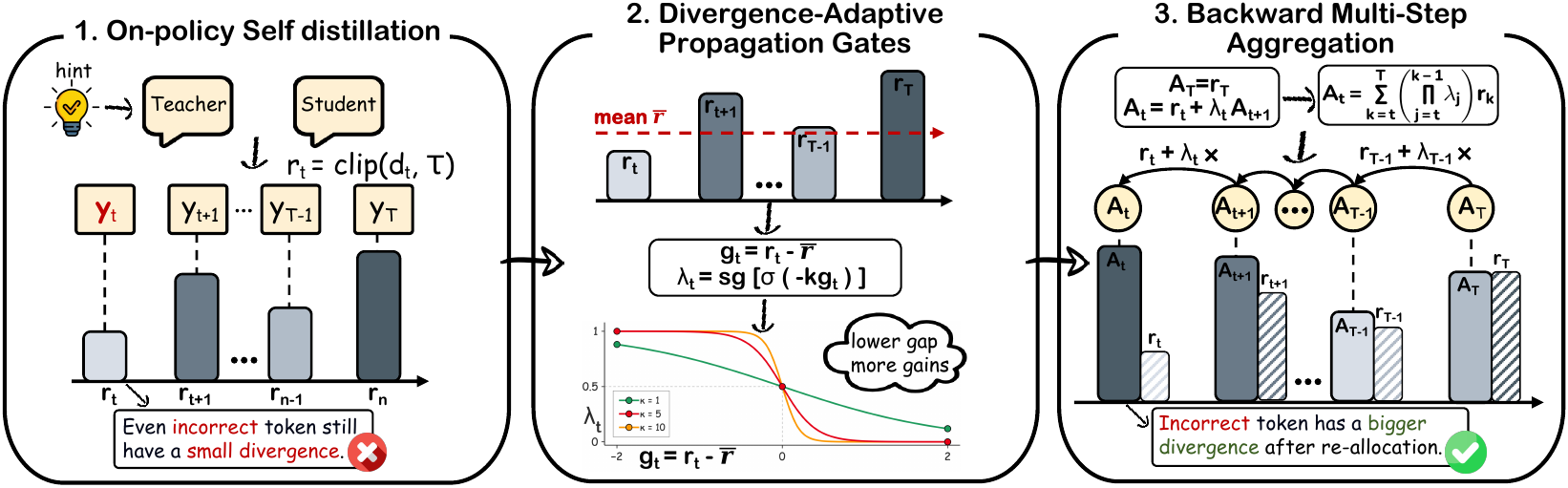}
    \caption{Overview of DASH. A privileged teacher evaluates a
    student-generated rollout to produce local distillation signals.
    DASH converts their sequence-relative gaps into adaptive
    propagation gates and applies backward multi-step aggregation to
    construct temporally conditioned token-level weights and adaptive
    supervision horizons.}
    \label{fig:method_overview}
\end{figure*}

For a sampled response $\mathbf y$, vanilla OPSD uniformly aggregates the local distillation losses:
\begin{equation}
    \mathcal L_{\mathrm{OPSD}}(\theta;\mathbf y)
    =
    \frac{1}{T}
    \sum_{t=1}^{T}d_t.
    \label{eq:opsd_loss}
\end{equation}
Vanilla OPSD treats the sampled response as fixed during optimization
and differentiates Eq.~\eqref{eq:opsd_loss} only through the student
distributions~\cite{agarwal2024onpolicy,liu2026teacherhelp}.

\section{Method}
\label{sec:method}

Vanilla OPSD assigns the same explicit coefficient to every local
distillation loss, regardless of its position or the surrounding
evolution of discrepancies along the rollout. We first characterize
this limitation and use the exact gradient of the corresponding
expected on-policy objective as a structural reference. Beyond the
direct distillation term retained by standard training, the exact
gradient contains a trajectory term whose coefficient depends on
subsequent divergences. Motivated by this order-dependent structure,
we introduce DASH, which uses sequence-relative divergence gaps,
adaptive propagation gates, and backward multi-step aggregation to
construct temporally conditioned coefficients for direct distillation
gradients. Figure~\ref{fig:method_overview} provides an overview.

\subsection{Coefficient Structure of Vanilla OPSD}
\label{sec:order_analysis}

For a sampled rollout $\mathbf y$, vanilla OPSD retains the detached gradient
\begin{equation}
    \nabla_\theta
    \mathcal L_{\mathrm{OPSD}}(\theta;\mathbf y)
    =
    \frac{1}{T}
    \sum_{t=1}^{T}
    \nabla_\theta^{\mathrm{loc}} d_t.
    \label{eq:detached_opsd_gradient}
\end{equation}
Thus, every local gradient has the same explicit coefficient $1/T$, independent of the ordered discrepancy profile $\mathbf d=(d_1,\ldots,d_T)$. This does not make the local gradients sequence-independent: each $\nabla_\theta^{\mathrm{loc}}d_t$ is still evaluated at its corresponding student-generated prefix $s_t$. Only the explicit coefficient profile is independent of how the local discrepancies evolve along the rollout.

To isolate coefficient structure, consider the following fixed-horizon
surrogate. Let $H$ be a deterministic truncation horizon; after EOS,
the trajectory enters a parameter-independent absorbing state and
$d_t=0$ thereafter.
\begin{equation}
    \mathcal J_H(\theta)
    =
    \mathbb E_{(x,z)\sim\mathcal D}
    \mathbb E_{\mathbf y\sim P_\theta(\cdot\mid x)}
    \left[
        \frac{1}{H}
        \sum_{t=1}^{H}d_t
    \right].
    \label{eq:fixed_horizon_objective}
\end{equation}

\begin{proposition}[Fixed-Horizon Gradient Decomposition]
\label{prop:exact_gradient_decomposition}
Under the fixed-horizon construction above, the gradient of
Eq.~\eqref{eq:fixed_horizon_objective} is
\begin{equation}
\begin{split}
    \nabla_\theta\mathcal J_H(\theta)
    =
    \mathbb E\Bigg[
        &\frac{1}{H}
        \sum_{t=1}^{H}
        \nabla_\theta^{\mathrm{loc}}d_t
        \\
        &+
        \frac{1}{H}
        \sum_{u=1}^{H}
        G^D_{u+1}
        \nabla_\theta
        \log\pi_\theta(y_u\mid s_u)
    \Bigg],
    \label{eq:exact_opsd_gradient}
\end{split}
\end{equation}
where
$G^D_{u+1}=\sum_{t=u+1}^{H}d_t$
is the future divergence-to-go after position $u$.
\end{proposition}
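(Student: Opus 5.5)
The plan is a standard product-rule/score-function (REINFORCE) computation on the finite-horizon expectation, followed by a causality argument that removes past-dependent terms. Fix $(x,z)$, since the outer expectation over $\mathcal D$ does not depend on $\theta$ and commutes with differentiation. Write the inner expectation as a finite sum over sequences $\mathbf y=(y_1,\ldots,y_H)$ in the absorbing-state construction. The horizon is deterministic and the vocabulary is finite, so this sum is finite. Its probability is $P_\theta(\mathbf y\mid x)=\prod_{u=1}^{H}\pi_\theta(y_u\mid s_u)$. After EOS the factors are parameter-independent, with probability one on the absorbing token. The integrand is $\frac1H\sum_t d_t(\theta;\mathbf y)$, where $d_t$ depends on $\theta$ only through $\pi_t^S=\pi_\theta(\cdot\mid s_t)$; the teacher target is under $\operatorname{sg}$. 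After EOS, $d_t\equiv0$.

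Differentiate each term with the product rule. One part differentiates $d_t$ at the fixed prefix, which gives $\nabla_\theta^{\mathrm{loc}}d_t$. The other part differentiates the sampling probability, and the identity $\nabla P_\theta=P_\theta\nabla\log P_\theta$ turns it into $d_t\sum_{u=1}^{H}\nabla_\theta\log\pi_\theta(y_u\mid s_u)$. This yields
\[
\nabla_\theta\mathcal J_H=\mathbb E\Big[\tfrac1H\sum_t\nabla^{\mathrm{loc}}_\theta d_t+\tfrac1H\sum_{t}\sum_{u=1}^{H} d_t\,\nabla_\theta\log\pi_\theta(y_u\mid s_u)\Big].
\]
Interchanging derivative and sum is trivial because the sum is finite. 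Post-EOS score terms vanish because those factors do not depend on $\theta$.

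The key step is to drop the pairs with $u\ge t$. For such a pair, $d_t$ is a function of $s_t=(x,y_{<t})$ only, so it is measurable with respect to $y_{<t}\subseteq y_{\le u-1}$. Condition on $s_u$ and use $\mathbb E_{y_u\sim\pi_\theta(\cdot\mid s_u)}[\nabla_\theta\log\pi_\theta(y_u\mid s_u)]=\nabla_\theta\sum_v\pi_\theta(v\mid s_u)=0$. By the tower property, every term with $u\ge t$ then has zero expectation. This leaves only $u<t$, that is, $t\ge u+1$.

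Finally, swap the order of summation: $\sum_{t}\sum_{u<t}d_t\nabla\log\pi(y_u\mid s_u)=\sum_{u=1}^{H}\big(\sum_{t=u+1}^{H}d_t\big)\nabla\log\pi(y_u\mid s_u)=\sum_u G^D_{u+1}\nabla\log\pi(y_u\mid s_u)$, with $G^D_{H+1}=0$. This gives Eq.~\eqref{eq:exact_opsd_gradient}.

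The main obstacle is the bookkeeping at the boundary, and I will handle it first. Three points need checking: that $d_t$ does not depend on $y_t$, so the $u=t$ term really vanishes; that the absorbing-state convention makes the number of factors exactly $H$ with no $\theta$-dependence after EOS; and that the teacher's stop-gradient removes any $\nabla_\theta$ through $\pi_{\bar\theta}$, even though $\bar\theta$ may coincide with $\theta$.
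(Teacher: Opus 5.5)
Your proposal is correct and follows the paper's proof essentially step for step. You expand the finite padded-trajectory expectation with the log-derivative identity, cancel the pairs $t\le u$ using the zero conditional mean of the score (since $d_t$ is determined by $y_{<t}$), and regroup the surviving $t>u$ terms into $G^D_{u+1}$; your explicit treatment of the absorbing state, the stop-gradient teacher, and fixing $(x,z)$ before conditioning mirrors the more careful appendix version of the same argument.
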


\begin{proof}
Let
\(
D_{\mathrm{tot}}(\mathbf y;\theta)
=
\sum_{t=1}^{H}d_t
\).
Applying the log-derivative identity
\cite{williams1992reinforce,sutton1999policygradient}
to the trajectory expectation gives
\begin{equation}
\begin{split}
    \nabla_\theta\mathcal J_H(\theta)
    =
    \frac{1}{H}\,
    \mathbb E\Bigg[
        &\sum_{t=1}^{H}
        \nabla_\theta^{\mathrm{loc}}d_t
        \\
        &+
        D_{\mathrm{tot}}
        \sum_{u=1}^{H}
        \nabla_\theta
        \log\pi_\theta(y_u\mid s_u)
    \Bigg].
    \label{eq:gradient_log_derivative}
\end{split}
\end{equation}

For $t\leq u$, the value of $d_t$ is determined before $y_u$
is sampled and is therefore measurable with respect to $s_u$.
Using the zero conditional mean of the score function,
\begin{equation}
\mathbb E_{y_u\sim\pi_\theta(\cdot\mid s_u)}
\left[
    d_t
    \nabla_\theta\log\pi_\theta(y_u\mid s_u)
    \,\middle|\,
    s_u
\right]
=
0,
\qquad t\leq u.
\label{eq:causal_score_cancellation}
\end{equation}
Hence, the score term at position $u$ retains only losses at
positions $t>u$, whose sum is
$G^D_{u+1}=\sum_{t=u+1}^{H}d_t$.
Substituting this result into
Eq.~\eqref{eq:gradient_log_derivative}
yields Eq.~\eqref{eq:exact_opsd_gradient}.
\end{proof}

The first term is the direct distillation gradient retained by vanilla
OPSD. The second is a trajectory score-function term with
future-divergence coefficients, reflecting changes in future visited
prefixes when the sampling path is differentiated
\cite{ross2011dagger,bengio2015scheduled}. This fixed-horizon
decomposition provides only structural motivation: DASH does not
estimate $G^D_{u+1}$, introduce score-function gradients, or perform
future-to-past credit assignment.

\subsection{Divergence-Adaptive Supervision Horizons}
\label{sec:dash}

The fixed-horizon decomposition highlights a structural contrast:
order dependence appears in the trajectory coefficients, whereas the
direct distillation term retained by vanilla OPSD uses constant
explicit coefficients. Motivated by this contrast, DASH makes the
coefficients of direct local distillation losses depend on the realized
discrepancy sequence through adaptive supervision horizons.

Using the propagation gates, DASH applies the backward recursion~\cite{sutton1988td,schulman2016gae}

\begin{equation}
    A_T=r_T,
    \qquad
    A_t=r_t+\lambda_tA_{t+1},
    \quad
    t=T-1,\ldots,1,
    \label{eq:dash_recursion}
\end{equation}
and minimizes
\begin{equation}
    \mathcal L_{\mathrm{DASH}}
    =
    \frac{1}{T}\sum_{t=1}^{T}A_t.
    \label{eq:dash_objective}
\end{equation}
Here, $\lambda_t$ controls how strongly later local signals propagate
across the boundary between positions $t$ and $t+1$~\cite{hochreiter1997lstm,cho2014gru}. In particular, for
$k>t$, the contribution of $r_k$ to $A_t$ is scaled by
$\prod_{j=t}^{k-1}\lambda_j$. Summing these contributions over all
aggregation starting positions gives the weighted form
\begin{equation}
    \mathcal L_{\mathrm{DASH}}
    =
    \frac{1}{T}\sum_{k=1}^{T}c_kr_k,
    \qquad
    c_1=1,
    \qquad
    c_k=1+\lambda_{k-1}c_{k-1}.
    \label{eq:dash_weighted_objective}
\end{equation}
Thus, $c_k$ aggregates the contributions to $r_k$ from all starting
positions up to $k$, weighted by the intervening gates. Although its
recursion involves $\lambda_1,\ldots,\lambda_{k-1}$, each gate is
computed relative to the sequence-level mean $\bar r$. Consequently,
$c_k$ is conditioned on the realized discrepancy sequence.

Let $\ell_{t,v}$ denote the vocabulary-level summand in Eq.~\eqref{eq:local_distillation_loss}. Algorithm~\ref{alg:dash} summarizes the additional DASH computation after the local KL contributions have been obtained along a rollout.

\begin{algorithm}[t]
\caption{DASH aggregation for one student rollout. Gradients flow only
through the local signals $r_t$.}
\label{alg:dash}
\small
\textbf{Input}: Local KL contributions $\{\ell_{t,v}\}$, clipping
threshold $\tau$, and propagation sensitivity $\kappa$\par
\begin{algorithmic}[1]
\FOR{$t=1,\ldots,T$}
    \STATE
    $r_t
    \leftarrow
    \sum_{v\in\mathcal A}
    \min(\ell_{t,v},\tau)$
\ENDFOR

\STATE
$\bar r
\leftarrow
\frac{1}{T}\sum_{t=1}^{T}r_t$

\FOR{$t=1,\ldots,T-1$}
    \STATE
    $\lambda_t
    \leftarrow
    \operatorname{sg}
    \left[
        \sigma\left(-\kappa(r_t-\bar r)\right)
    \right]$
\ENDFOR

\STATE $A_T\leftarrow r_T$
\FOR{$t=T-1,T-2,\ldots,1$}
    \STATE
    $A_t\leftarrow r_t+\lambda_tA_{t+1}$
\ENDFOR

\STATE \textbf{return}
$\frac{1}{T}\sum_{t=1}^{T}A_t$
\end{algorithmic}
\end{algorithm}





\section{Experiments}

\subsection{Experimental Setup}
\label{sec:experimental_setup}

\paragraph{Models and Datasets.}
We conduct experiments with the Qwen3-1.7B, Qwen3-4B, and Qwen3-8B Instruct models~\cite{yang2025qwen3}. Our reruns use OpenThoughts-Math-30K~\cite{guha2025openthoughts}, which contains 29,434 mathematical reasoning examples, each with a problem, reference solution, and final answer. In every on-policy distillation run, the student receives only the problem and generates its own rollout. For OPSD, EOPD, PW-OPSD, and DASH, the matched privileged teacher additionally receives the reference solution, including the final answer; AVSD retains its method-defining multi-view privileged contexts. We evaluate on AIME 2024~\cite{aime2024}, AIME 2025~\cite{aime2025}, and HMMT February 2025~\cite{hmmt2025feb}, each containing 30 problems, and determine correctness using the official answer keys. For each benchmark, Avg@12 averages the binary correctness of 12 independently sampled responses per problem and then averages over all problems. We additionally report the unweighted mean across the three benchmarks.

\paragraph{Baselines.}

We compare DASH with seven baselines. \emph{Base} denotes the original model without additional training. \emph{SFT} performs supervised fine-tuning on the reference solutions~\cite{ouyang2022instructgpt}. \emph{GRPO} optimizes verifiable rewards based on final-answer correctness~\cite{shao2024deepseekmath}. \emph{OPSD} uniformly aggregates token-level distillation losses along student-generated trajectories~\cite{zhao2026selfdistilled}. \emph{EOPD} adapts token-level distillation using teacher entropy~\cite{jin2026eopd}. \emph{AVSD} combines supervision from multiple privileged teacher views~\cite{nguyen2026avsd}. \emph{PW-OPSD} applies predefined position-dependent weights to token-level distillation losses~\cite{liu2026pwopsd}. Table~\ref{tab:main_results} combines externally reported reference results with our reruns. Base, SFT, and GRPO are taken from Zhao et al.~\cite{zhao2026selfdistilled} and marked with $\dagger$. We rerun OPSD, EOPD, AVSD, PW-OPSD, and DASH. OPSD, EOPD, PW-OPSD, and DASH use the same privileged-OPSD backbone while retaining their method-specific objectives or weighting rules; AVSD retains its original multi-view teacher construction. Exact method instantiations, run counts, and result provenance are provided in Appendix~\ref{sec:reproducibility}.

\label{subsec:main_results}

\begin{table*}[t]
\centering
\small
\setlength{\tabcolsep}{1mm}
\begin{tabular}{
    @{}l
    cccc@{\hspace{0.6em}}
    cccc@{\hspace{0.6em}}
    cccc@{}
}
\toprule
& \multicolumn{4}{c}{\textbf{Qwen3-1.7B}}
& \multicolumn{4}{c}{\textbf{Qwen3-4B}}
& \multicolumn{4}{c}{\textbf{Qwen3-8B}} \\
\cmidrule(lr){2-5}
\cmidrule(lr){6-9}
\cmidrule(lr){10-13}
\textbf{Method}
& \shortstack{\textbf{AIME}\\\textbf{2024}}
& \shortstack{\textbf{AIME}\\\textbf{2025}}
& \shortstack{\textbf{HMMT}\\\textbf{2025}}
& \textbf{Average}
& \shortstack{\textbf{AIME}\\\textbf{2024}}
& \shortstack{\textbf{AIME}\\\textbf{2025}}
& \shortstack{\textbf{HMMT}\\\textbf{2025}}
& \textbf{Average}
& \shortstack{\textbf{AIME}\\\textbf{2024}}
& \shortstack{\textbf{AIME}\\\textbf{2025}}
& \shortstack{\textbf{HMMT}\\\textbf{2025}}
& \textbf{Average} \\
\midrule
Base$^{\dagger}$
& 51.50 & 36.70 & 23.10 & 37.10
& 74.90 & 66.40 & 42.20 & 61.17
& 75.80 & 65.60 & 43.90 & 61.77 \\
SFT$^{\dagger}$
& 48.40 & 36.30 & 22.70 & 35.80
& 70.20 & 62.30 & 43.40 & 58.63
& 72.30 & 64.20 & 42.90 & 59.80 \\
GRPO$^{\dagger}$
& 51.10 & 38.30 & 23.70 & 37.70
& 75.60 & \thirdscore{68.10} & \thirdscore{44.40} & 62.70
& \thirdscore{76.40} & 68.90 & \thirdscore{46.70} & 64.00 \\
\midrule
OPSD
& \thirdscore{55.60} & \thirdscore{40.80} & \thirdscore{29.20} & \thirdscore{41.87}
& \secondscore{76.40} & \secondscore{68.30} & \secondscore{46.10} & \secondscore{63.60}
& \secondscore{77.80} & \secondscore{70.80} & 45.80 & 64.80 \\
EOPD
& 51.90 & 38.10 & 26.40 & 38.80
& 73.90 & 64.70 & 41.90 & 60.17
& \secondscore{77.80} & \secondscore{70.80} & 46.40 & \thirdscore{65.00} \\
AVSD
& 55.30 & 37.50 & 24.70 & 39.17
& \thirdscore{76.20} & \secondscore{68.30} & 44.20 & \thirdscore{62.90}
& 75.40 & \thirdscore{69.60} & \secondscore{47.10} & 64.03 \\
			
PW-OPSD
& \secondscore{57.80} & \secondscore{42.20} & \secondscore{30.30} & \secondscore{43.43}
& \thirdscore{76.20} & 67.78 & 43.33 & 62.44
& \secondscore{77.80} & \secondscore{70.80} & \secondscore{47.10} & \secondscore{65.23} \\
\midrule
\textbf{DASH (Ours)}
& \firstscore{58.30} & \firstscore{45.80} & \firstscore{31.10} & \firstscore{45.07}
& \firstscore{77.20} & \firstscore{71.10} & \firstscore{46.70} & \firstscore{65.00}
& \firstscore{78.90} & \firstscore{71.40} & \firstscore{48.90} & \firstscore{66.40} \\
\bottomrule
\end{tabular}
\caption{Main results across three model scales. Scores are Avg@12 and
Average is the unweighted mean across benchmarks. $\dagger$ denotes
results reported by Zhao et al.~\cite{zhao2026selfdistilled}; all other
entries are our reruns. OPSD and DASH report four-seed means. In each column,
the top-three distinct compared results are highlighted:
\colorbox{rankfirst}{\textbf{1st}},
\colorbox{ranksecond}{\underline{2nd}}, and
\colorbox{rankthird}{3rd}; tied scores share the same rank.}
\label{tab:main_results}
\end{table*}

\paragraph{Implementation Details.}
All methods rerun by us start from the corresponding Qwen3 Instruct checkpoint, train for the full 200-step budget, and use the same checkpoint-selection and benchmark-evaluation procedures. We save checkpoints every 20 steps and, for each method, model scale, and training seed, select the checkpoint within 200 steps that maximizes the unweighted average across the three benchmarks; all three benchmark scores are taken from that single checkpoint. We refer to this as best-within-200-step reporting rather than held-out validation selection.

Following the short-rollout OPSD setting studied by Zhao et al.~\cite{zhao2026selfdistilled}, the matched privileged-OPSD runs use a maximum student completion length of 1,024 tokens. We additionally evaluate maximum lengths of 2,048 and 4,096 tokens; because they yield comparable performance, we use 1,024 tokens for computational efficiency. Unless otherwise specified, these runs use LoRA~\cite{hu2022lora} with rank 64 and scaling factor 128, a learning rate of $5\times10^{-6}$, and a global batch size of 64. Student rollouts are sampled with temperature 1.1, top-$p$ 0.95, and top-$k$ 20. DASH uses full-vocabulary forward KL as its local distillation loss, caps each vocabulary-level divergence contribution at $\tau=0.05$, and sets $\kappa=5$. For Avg@12 evaluation, we independently sample 12 responses per problem in thinking mode using temperature 1.0, top-$p$ 1.0, and a maximum of 38,912 newly generated tokens. OPSD, DASH, and every DASH ablation configuration use training seeds $s\in\{0,1,2,3\}$ and report four-seed means. Complete provenance, method-specific settings, checkpoint selection, prompts, run counts, and seed statistics are provided in Appendices~\ref{sec:reproducibility} and~\ref{sec:four_seed_results}.

\subsection{Main Results}

Table~\ref{tab:main_results} compares DASH with standard training baselines and recent on-policy distillation methods across three model scales and three mathematical reasoning benchmarks.

\noindent\textbf{Overall performance.}
DASH obtains the highest score among the compared results in all nine benchmark--model settings and the highest macro-average at each model scale. Under the matched rerun protocol, DASH raises the four-seed OPSD average from 41.87 to 45.07 on Qwen3-1.7B, from 63.60 to 65.00 on Qwen3-4B, and from 64.80 to 66.40 on Qwen3-8B. The corresponding gains are 3.20, 1.40, and 1.60 points. DASH also improves over the matched OPSD rerun on every benchmark at every model scale, indicating that its gains are not confined to a particular model capacity or dataset.

\noindent\textbf{Comparison with recent distillation methods.}
Among the displayed comparisons, PW-OPSD provides the highest competing macro-average at the 1.7B and 8B scales, whereas OPSD provides the highest at the 4B scale. DASH exceeds the compared PW-OPSD result by 1.64 points on Qwen3-1.7B and by 1.17 points on Qwen3-8B, and it exceeds the matched OPSD result by 1.40 points on Qwen3-4B. Thus, DASH attains the best displayed scores among the compared methods, without implying a statistical significance test against the single-run baselines.



\subsection{Component Ablation}
\label{subsec:component_ablation}

Having established the overall effectiveness of DASH, we next examine which aspects of its adaptive coefficient allocation account for the observed gains. Unless otherwise specified, all experiments in this subsection use Qwen3-1.7B, the same four training seeds, and the same training, checkpoint-selection, and evaluation protocol as the matched OPSD comparison. We investigate three possible sources of improvement: whether sequence-adaptive propagation is more effective than fixed multi-step aggregation, whether the direction of the gap-to-gate mapping matters, and whether the gains can be explained by the increase in average coefficient scale.

\begin{figure*}[t]
    \centering
    \begin{subfigure}[t]{0.485\textwidth}
        \centering
        \includegraphics[width=\linewidth]
        {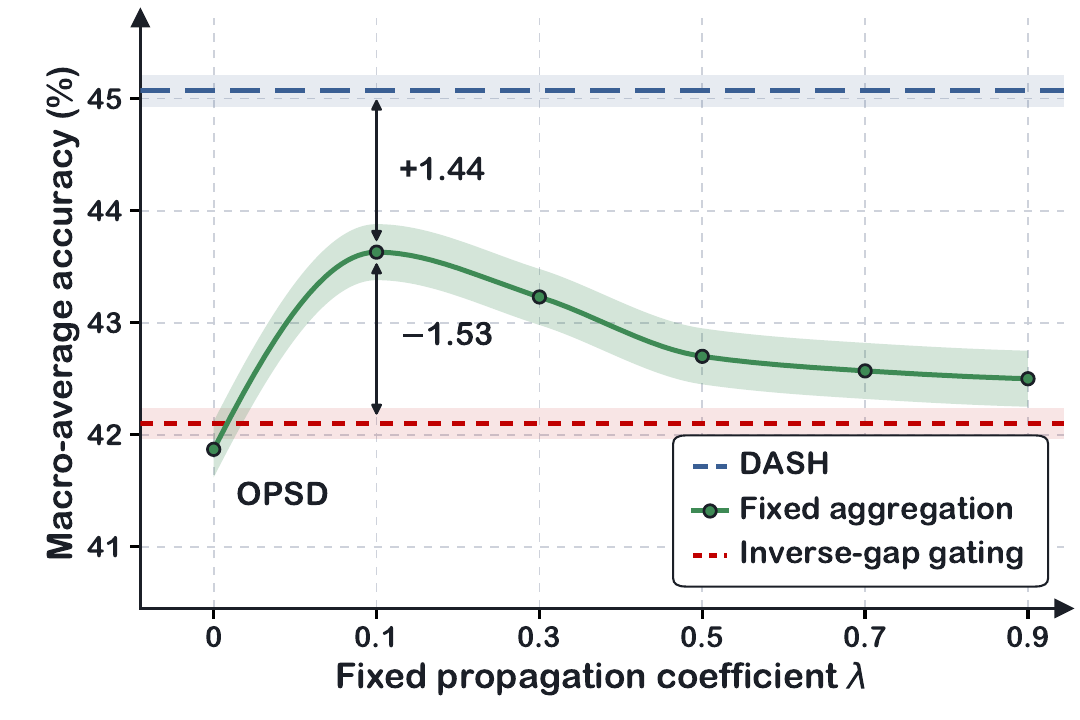}
        \caption{Fixed values of $\lambda$ and Inverse-gap.}
        \label{fig:aggregation_adaptivity}
    \end{subfigure}
    \hfill
    \begin{subfigure}[t]{0.485\textwidth}
        \centering
        \includegraphics[width=\linewidth]
        {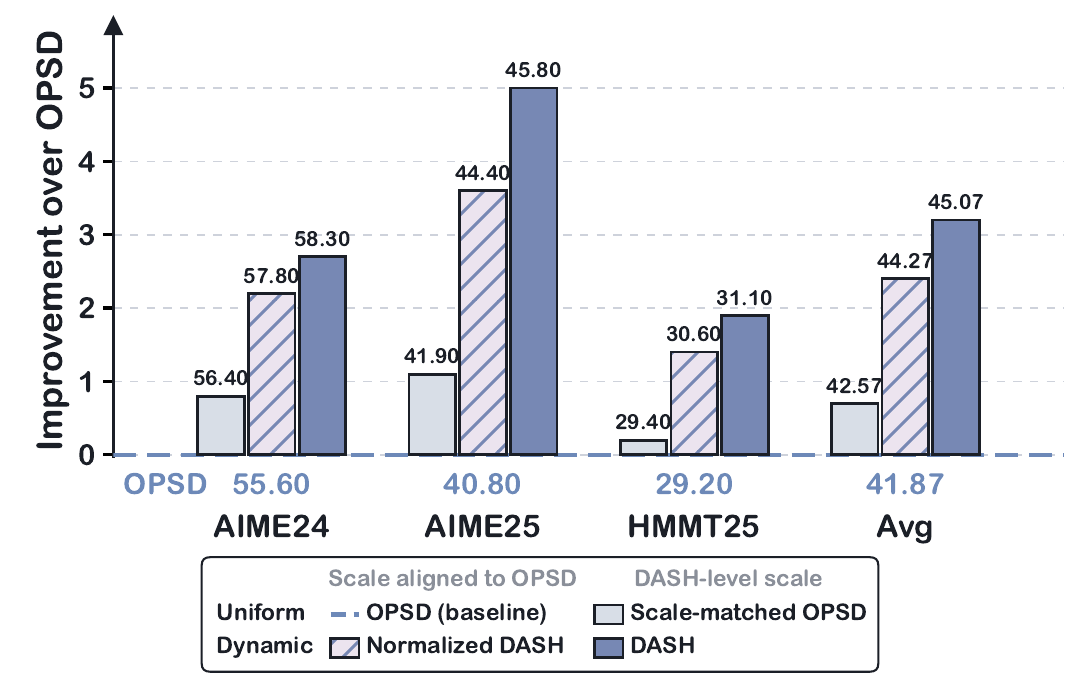}
        \caption{Coefficient allocation and average scale.}
        \label{fig:factorial_ablation}
    \end{subfigure}

    \caption{Component ablations of DASH on Qwen3-1.7B.
    (a) Fixed values of the propagation coefficient $\lambda$ test
    adaptive against fixed aggregation, while Inverse-gap reverses the
    sign of the DASH gate function. (b) Scale-matched OPSD and normalized
    DASH separate the relative coefficient profile from its average
    scale. Panel (a) reports the unweighted mean across the three
    benchmarks. Panel (b) plots improvements over the matched OPSD
    rerun, with annotations showing absolute Avg@12 scores. All values
    are means over training seeds $0,1,2,3$.}
    \label{fig:component_ablation}
\end{figure*}

\paragraph{Adaptive versus fixed propagation.}
We first examine whether adapting the propagation gates to the discrepancy sequence provides benefits beyond multi-step aggregation alone. We replace the token-wise gates $\lambda_t$ with a fixed coefficient $\lambda$ shared across all positions and rollouts. These variants retain the same backward aggregation structure as DASH but remove its dependence on the realized discrepancy sequence. We evaluate $\lambda\in\{0.1,0.3,0.5,0.7,0.9\}$, with $\lambda=0$ recovering vanilla OPSD.

As shown in Figure~\ref{fig:aggregation_adaptivity}, every tested fixed coefficient improves over vanilla OPSD, and the best setting, $\lambda=0.1$, raises the macro-average from 41.87 to 43.63. This indicates that extending supervision beyond independent local losses is beneficial even with a fixed effective horizon. DASH further improves the macro-average to 45.07, outperforming the best fixed setting by 1.44 points. Thus, fixed multi-step aggregation accounts for part of the
gain, while adapting the propagation gates to the realized discrepancy
sequence provides an additional improvement.

The fixed-$\lambda$ comparison shows that adaptive propagation is
beneficial, but it does not establish how the sequence-relative gap
should control the gate. Let $g_t=r_t-\bar r$. DASH and Inverse-gap use
the same gap, sensitivity, and aggregation recursion, differing only in
the sign inside the gate function:
\begin{equation}
    \lambda_t^{\mathrm{DASH}}
    =
    \operatorname{sg}\!\left[\sigma(-\kappa g_t)\right],
    \qquad
    \lambda_t^{\mathrm{Inv}}
    =
    \operatorname{sg}\!\left[\sigma(+\kappa g_t)\right].
    \label{eq:inverse_gap_control}
\end{equation}
Since $\lambda_t$ controls whether subsequent local signals cross the
boundary after position $t$, DASH assigns a larger gate when
$g_t<0$ and a smaller gate when $g_t>0$. It therefore produces a longer
effective supervision horizon after a below-average local signal and a
shorter horizon after an above-average one. Inverse-gap reverses exactly
this behavior. This comparison directly tests our design hypothesis
that propagation should remain more open after relatively small
discrepancies and become more restricted after unusually large ones.

As shown in Figure~\ref{fig:aggregation_adaptivity}, Inverse-gap reaches
a macro-average of 42.10. It is only 0.23 points above vanilla OPSD,
1.53 points below the best fixed coefficient, and 2.97 points below
DASH. Thus, the gain does not arise from introducing an arbitrary
gap-conditioned gate. The result supports the sign used by DASH among
the two tested adaptive mappings.

\paragraph{Coefficient allocation versus average scale.}
Finally, we examine whether DASH benefits from its relative coefficient
profile or merely from stronger overall supervision. In the weighted
objective
$\mathcal L_{\mathrm{DASH}}=T^{-1}\sum_k c_k r_k$, the recursive
aggregation produces $c_k\geq1$. DASH therefore changes both the
relative allocation of token-level supervision and the average
coefficient scale. This introduces an alternative explanation: uniformly
increasing all coefficients might reproduce the improvement without
using discrepancy-conditioned allocation.

We disentangle relative coefficient allocation from average coefficient
scale using the $2\times2$ factorial comparison in
Figure~\ref{fig:factorial_ablation}. Normalized DASH preserves the
relative coefficient profile at the OPSD scale, while scale-matched
OPSD applies uniform coefficients at the DASH scale. Dynamic allocation
improves performance by 2.40 points at the OPSD scale and 2.50 points at
the DASH scale. In contrast, increasing the average scale contributes
only 0.70 points under uniform allocation and 0.80 points under dynamic
allocation. The consistent advantage at both scales shows that the
improvement primarily comes from DASH's discrepancy-conditioned
coefficient allocation, with the increased scale providing only a
smaller complementary benefit.

Overall, the component ablations show that the gains of DASH cannot be
reproduced by fixed multi-step aggregation, a reversed adaptive mapping,
or uniform coefficient scaling. Its effectiveness primarily arises from
adapting the relative coefficient allocation to the discrepancy sequence
using the proposed gap-to-gate direction.

\subsection{Design Choices and Sensitivity}
\label{subsec:parameter_ablations}

We examine three design choices that govern how DASH constructs and
propagates token-level supervision. The distillation divergence and
vocabulary support determine the local signal used for both direct
distillation and gate construction, while the propagation sensitivity
controls how this signal shapes the adaptive supervision horizon.
These analyses assess the robustness of DASH to its main configuration
choices and support the settings used in the main experiments. Unless
otherwise specified, all experiments use Qwen3-1.7B, the same four
training seeds, and the same training, checkpoint-selection, and
evaluation protocol as the matched OPSD comparison.

\begin{figure}[t]
    \centering
    \begin{subfigure}[t]{0.55\linewidth}
        \centering
        \includegraphics[width=\linewidth]
        {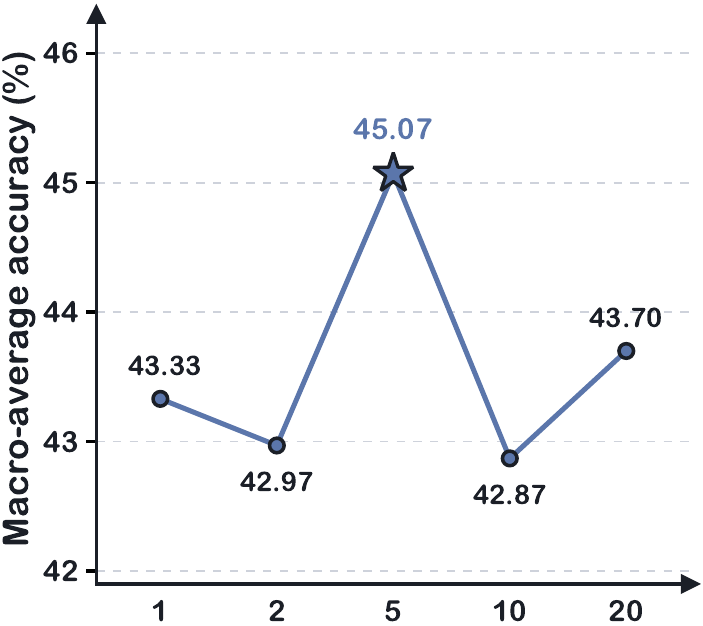}
        \caption{Propagation sensitivity $\kappa$.}
        \label{fig:prop_kappa}
    \end{subfigure}
    \hfill
    \begin{subfigure}[t]{0.4\linewidth}
        \centering
        \includegraphics[width=\linewidth]
        {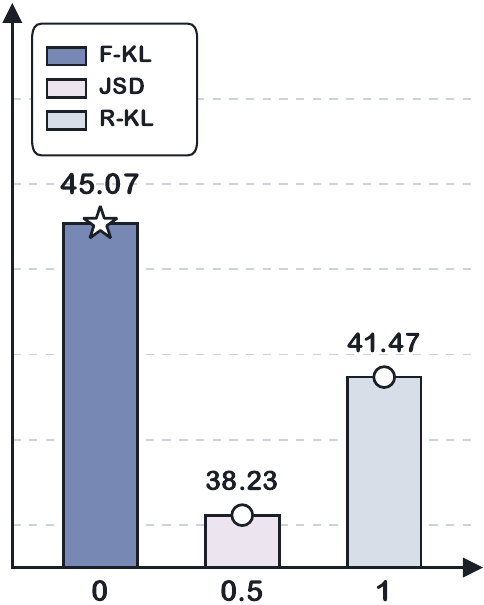}
        \caption{Local divergence.}
        \label{fig:distillation_divergence}
    \end{subfigure}

    \caption{Parameter ablations of DASH on Qwen3-1.7B.
    (a) Macro-average accuracy under different propagation
    sensitivities $\kappa$. (b) Comparison of forward KL, symmetric
    JSD, and reverse KL as the local distillation objective. Stars
    indicate the settings used in the main experiments. Values are
    means over training seeds $0,1,2,3$.}
    \label{fig:parameter_ablations}
\end{figure}

\paragraph{Propagation sensitivity $\kappa$.}
The sensitivity $\kappa$ controls the sharpness of the propagation
gates. Smaller values make the gates less responsive to
sequence-relative gaps, while larger values produce sharper changes
around the sequence mean. We evaluate
$\kappa\in\{1,2,5,10,20\}$. As shown in
Figure~\ref{fig:prop_kappa}, every tested value outperforms the standard
OPSD macro-average of 41.87, indicating that the improvement is not
limited to a single value of $\kappa$. Among the tested settings,
$\kappa=5$ achieves the highest macro-average of 45.07. The lower
performance at both smaller and larger values suggests that moderate
gate sensitivity better balances insufficient adaptation and overly
sharp responses to local gap variations. We therefore use $\kappa=5$
in the main experiments.

\paragraph{Choice of distillation divergence.}
We examine the effect of the distillation divergence by replacing the
forward KL used in the main configuration with symmetric JSD and
reverse KL~\cite{agarwal2024onpolicy,gu2024minillm,ko2024distillm}, while keeping all other settings fixed. As shown in Figure~\ref{fig:distillation_divergence}, forward KL achieves the highest macro-average score of 45.07, outperforming symmetric JSD and reverse KL by 6.84 and 3.60 points, respectively. It also performs best on all three benchmarks. These results show that the divergence choice substantially affects DASH and support the use of forward KL in the main configuration.

\paragraph{Vocabulary support.}
The main configuration computes the local forward-KL signal over the
full vocabulary. We examine whether this distributional support can be
compressed by retaining the teacher's top-$k$ tokens and merging the
remaining vocabulary into a single tail term, while keeping all other
settings unchanged.

\begin{table}[t]
\centering
\small
\setlength{\tabcolsep}{2.7pt}
\begin{tabular}{@{}lcccc@{}}
\toprule
\textbf{Variant}
& \shortstack{\textbf{AIME}\\\textbf{2024}}
& \shortstack{\textbf{AIME}\\\textbf{2025}}
& \shortstack{\textbf{HMMT}\\\textbf{2025}}
& \textbf{Avg.} \\
\midrule
OPSD, full vocab.
& 55.60 & 40.80 & 29.20 & 41.87 \\
\midrule
DASH, full vocab.
& \textbf{58.30}
& \textbf{45.80}
& \textbf{31.10}
& \textbf{45.07} \\
DASH, top-100 $+$ tail
& \underline{58.10}
& \underline{45.00}
& \underline{30.00}
& \underline{44.37} \\
DASH, top-1 $+$ tail
& 48.10 & 33.60 & 22.80 & 34.83 \\
\bottomrule
\end{tabular}
\caption{Effect of vocabulary support on Qwen3-1.7B. The top-$k$ plus
tail variants retain the teacher's top-$k$ tokens and merge the
remaining vocabulary into a single tail term. Values are four-seed
means.}
\label{tab:vocabulary_support}
\end{table}

As shown in Table~\ref{tab:vocabulary_support}, retaining the top 100
teacher tokens yields an average score of 44.37, only 0.70 points below
the full-vocabulary configuration and 2.50 points above standard OPSD.
In contrast, reducing the explicit support to only the top teacher
token lowers the average score to 34.83, a drop of 10.24 points from
the main configuration. These results indicate that a moderately
compressed support preserves most of the benefit of DASH, whereas an
overly concentrated approximation discards important distributional
information. We therefore use the full vocabulary in the main
experiments.


\section{Conclusion}

In this work, we revisit how dense token-level supervision should be allocated in on-policy self-distillation. Vanilla OPSD assigns uniform weights to local distillation signals, overlooking the temporal evolution of teacher--student discrepancies along each trajectory. To address this limitation, we propose Divergence-Adaptive Supervision Horizons (DASH), a sequence-aware method for adaptive supervision allocation. DASH transforms sequence-relative divergence gaps into adaptive propagation gates and performs backward multi-step aggregation to construct temporally conditioned token-level weights and adaptive supervision horizons. Experiments across three mathematical reasoning benchmarks and three model scales show that DASH improves the four-seed mean of our matched vanilla OPSD reruns and obtains the highest scores among the compared results. DASH reuses the local teacher and student distributions already computed by vanilla OPSD and requires no additional teacher or student forward passes. We hope our work can offer new insights for improving dense supervision allocation in on-policy learning and long-horizon reasoning.



\bibliography{aaai2027}

\clearpage
\appendix

\lstset{
  basicstyle=\footnotesize\ttfamily,
  breaklines=true,
  columns=fullflexible,
  keepspaces=true,
  showstringspaces=false,
  numbers=none,
  literate={—}{{\textemdash}}1
}

\noindent\textbf{Overview.}
This appendix is organized into five parts.
Appendix~\ref{app:theory} provides supplementary theoretical analysis of DASH,
including the fixed-horizon gradient decomposition, the clipped local
signal, the aggregation coefficients, the effective supervision
horizon, and the implemented gradient.
Appendix~\ref{sec:reproducibility} records result provenance, baseline
instantiations, the exact checkpoint-selection rule, and implementation
details not included in the main paper, including the complete LoRA
configuration, optimizer and distributed-training settings, teacher
prompts, answer verification, and hardware.
Appendix~\ref{sec:four_seed_results} reports the four-seed OPSD and DASH
protocol and uncertainty estimates over random seeds \(0,1,2,3\).
Appendix~\ref{sec:complete_ablations} presents the complete component and design-choice ablations,
including fixed propagation coefficients, gate direction and signals,
coefficient scale, propagation sensitivity, divergence geometry, and
vocabulary support.
Appendix~\ref{sec:outcome_optimization} examines the compatibility of DASH with outcome-level
optimization through the complete GRPO grid and a cross-scale
extension. Together, these materials complement the theoretical and
empirical results in the main text with complete derivations,
reproducibility details, and additional experimental evidence.

\section{Supplementary Theoretical Analysis}
\label{app:theory}

This section expands the theoretical analysis in the main text. It
first restates and proves the fixed-horizon gradient decomposition, then
connects that result to the local signal, aggregation coefficients,
effective supervision horizons, and gradients used by DASH.

\subsection{Fixed-Horizon Construction}
\label{app:fixed_horizon}

Let $\pi_\theta$ denote the student policy. For a problem $x$ with
privileged information $z$, define the filtration
$\mathcal F_u=\sigma(x,z,y_{<u})$. Training rollouts use a deterministic
maximum completion length ($1{,}024$ tokens in the main configuration).
We therefore fix a truncation horizon $H\geq1$ and represent each
rollout as a padded sequence
$\mathbf y=(y_1,\ldots,y_H)$. If an end-of-sequence token is emitted at
position $T\leq H$, the process subsequently remains in a
parameter-independent absorbing state:
\begin{equation}
    \pi_\theta(\perp\mid s_u)=1,
    \qquad
    \nabla_\theta\log\pi_\theta(\perp\mid s_u)=0,
    \qquad u>T,
    \label{eq:supp_absorbing}
\end{equation}
where $s_u=(x,y_{<u})$. We set every local signal to zero after
termination. Conditional on $x$, the padded trajectory distribution is
\begin{equation}
    P_\theta(\mathbf y\mid x)
    =
    \prod_{u=1}^{H}\pi_\theta(y_u\mid s_u).
    \label{eq:supp_padded_factorization}
\end{equation}

The main analysis considers the fixed-horizon surrogate
\begin{equation}
    \mathcal J_H(\theta)
    =
    \mathbb E_{(x,z)\sim\mathcal D}
    \mathbb E_{\mathbf y\sim P_\theta(\cdot\mid x)}
    \left[
        \frac{1}{H}\sum_{t=1}^{H}d_t
    \right].
    \label{eq:supp_fixed_horizon_objective}
\end{equation}
For fixed $(x,z)$, the padded trajectory space is finite, so the
gradient and the inner trajectory expectation commute term by term.
Moreover, $1/H$ is deterministic and can be moved outside that
expectation. These are the two properties needed below; they would not
follow from treating the realized length $T$ as a constant outside an
expectation over variable-length trajectories.

\subsection{Privileged Teacher}
\label{app:teacher}

\paragraph{Adapter-disabled privileged teacher.}
The privileged teacher is the frozen base network evaluated with the
LoRA adapter disabled. Its parameters $\bar\theta$ therefore remain
independent of the trainable adapter parameters $\theta$. Thus,
\begin{equation}
    \pi_t^{\mathrm T}
    =
    \operatorname{sg}\!\left[
        \pi_{\bar\theta}(\cdot\mid s_t,z)
    \right],
    \qquad
    \nabla_\theta\pi_t^{\mathrm T}=0.
\end{equation}

Under this frozen-teacher setting, the local forward KL
\begin{equation}
    d_t
    =
    D_{\mathrm{KL}}\!\left(
        \pi_t^{\mathrm T}
        \,\middle\|\,
        \pi_\theta(\cdot\mid s_t)
    \right)
\end{equation}
depends explicitly on $\theta$ only through the student conditional.
Accordingly, $\nabla_\theta^{\mathrm{loc}}d_t$ denotes differentiation
through that conditional at the fixed sampled prefix, with both the
teacher target and the discrete sampling path detached.

\subsection{Decomposition and Proof}
\label{app:decomposition}

\begin{proposition}[Fixed-Horizon Gradient Decomposition]
\label{prop:supp_fixed_horizon}
Let $\rho_t$ be an $\mathcal F_t$-measurable per-position scalar whose
explicit dependence on $\theta$ at a fixed sampled prefix is
differentiable almost surely, and set $\rho_t=0$ for $t>T$. Define
\begin{equation}
    \mathcal J_H^\rho(\theta)
    =
    \mathbb E\!\left[
        \frac{1}{H}\sum_{t=1}^{H}\rho_t
    \right].
\end{equation}
Then
\begin{equation}
\begin{split}
    \nabla_\theta\mathcal J_H^\rho(\theta)
    =
    \mathbb E\Bigg[
        &\frac{1}{H}\sum_{t=1}^{H}
        \nabla_\theta^{\mathrm{loc}}\rho_t\\
        &+
        \frac{1}{H}\sum_{u=1}^{H}
        G^\rho_{u+1}
        \nabla_\theta\log\pi_\theta(y_u\mid s_u)
    \Bigg],
    \label{eq:supp_fixed_horizon_decomposition}
\end{split}
\end{equation}
where
$G^\rho_{u+1}:=\sum_{t=u+1}^{H}\rho_t$ is the future
signal-to-go after position $u$.
\end{proposition}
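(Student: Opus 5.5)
The plan is to prove the decomposition for a fixed pair $(x,z)$ and then average over $\mathcal D$. Fix $(x,z)$ and write the inner expectation as a finite sum over padded trajectories using the factorization in Eq.~\eqref{eq:supp_padded_factorization}:
\[
\mathcal J_H^\rho(\theta\mid x,z)=\frac{1}{H}\sum_{\mathbf y}P_\theta(\mathbf y\mid x)\sum_{t=1}^{H}\rho_t(\mathbf y;\theta).
\]
Because the padded trajectory space is finite and $1/H$ is deterministic, I will differentiate term by term. The product rule then gives two pieces: $P_\theta\,\nabla^{\mathrm{loc}}_\theta\rho_t$ and $\rho_t\,\nabla_\theta P_\theta$. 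The first piece yields exactly the local term. For the second, I use $\nabla_\theta P_\theta=P_\theta\sum_{u=1}^{H}\nabla_\theta\log\pi_\theta(y_u\mid s_u)$. The absorbing-state convention in Eq.~\eqref{eq:supp_absorbing} guarantees that positions $u>T$ contribute zero score, so the padded product is differentiable and the log-derivative identity is legitimate. This step reproduces Eq.~\eqref{eq:gradient_log_derivative} with $d_t$ replaced by $\rho_t$.

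The second step removes the non-causal cross terms. For each pair $(t,u)$ with $t\le u$, $\rho_t$ is $\mathcal F_t\subseteq\mathcal F_u$-measurable. I will condition on $\mathcal F_u$ and use the tower property:
\[
\mathbb E\!\left[\rho_t\nabla_\theta\log\pi_\theta(y_u\mid s_u)\right]=\mathbb E\!\left[\rho_t\,\mathbb E\!\left[\nabla_\theta\log\pi_\theta(y_u\mid s_u)\mid\mathcal F_u\right]\right].
\]
The inner conditional expectation equals $\sum_{v}\nabla_\theta\pi_\theta(v\mid s_u)=\nabla_\theta 1=0$, a finite sum over the vocabulary, or trivially zero in the absorbing state. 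Only the pairs with $t>u$ survive. Regrouping $\sum_u\sum_{t>u}\rho_t\nabla\log\pi_\theta(y_u\mid s_u)$ gives $\sum_u G^\rho_{u+1}\nabla_\theta\log\pi_\theta(y_u\mid s_u)$. For $u=H$ the sum is empty, so $G^\rho_{H+1}=0$. Averaging over $(x,z)\sim\mathcal D$ then yields Eq.~\eqref{eq:supp_fixed_horizon_decomposition}. I assume the outer gradient–expectation interchange is justified by standard dominated-convergence conditions, or that $\mathcal D$ is an empirical, finite distribution.

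I expect the main obstacle to be the meaning of $\nabla^{\mathrm{loc}}_\theta\rho_t$ when $\rho_t$ depends on $\theta$ only almost surely differentiably. One point is that $\rho_t$ depends on $\theta$ both explicitly (through the student conditional at the fixed prefix) and implicitly (through which prefix is sampled). In the finite-sum representation, the trajectory $\mathbf y$ is a summation index and does not depend on $\theta$, so only the explicit dependence gets differentiated; that is exactly $\nabla^{\mathrm{loc}}$. The clipped signal $r_t$ is nondifferentiable only on a measure-zero set of $\theta$. At such points I would interpret the statement almost everywhere in $\theta$, or via a chosen subgradient.

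A secondary subtlety is that setting $\rho_t=0$ for $t>T$ must keep $\rho_t$ measurable. It does, because the event $\{T<t\}$ is determined by $y_{<t}$ and is therefore $\mathcal F_t$-measurable.
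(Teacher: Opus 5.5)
Your proposal is correct and follows essentially the same route as the paper. You differentiate the finite padded-trajectory expectation to get the local term plus $R(\mathbf y;\theta)\nabla_\theta\log P_\theta(\mathbf y\mid x)$, factor the score over positions, and eliminate the pairs with $t\le u$ by conditioning on $\mathcal F_u$ (tower property, zero-mean score), which leaves $G^\rho_{u+1}$. Your remarks on the outer expectation over $\mathcal D$ and on the $\mathcal F_t$-measurability of the zero-padding make explicit details the paper leaves implicit, but they do not change the argument.
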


\begin{proof}
Let $R(\mathbf y;\theta)=\sum_{t=1}^{H}\rho_t$. The finite-horizon
construction and the log-derivative identity give
\begin{equation}
\begin{split}
    \nabla_\theta\mathcal J_H^\rho(\theta)
    =
    \frac{1}{H}\,
    \mathbb E\Bigg[
        &\sum_{t=1}^{H}\nabla_\theta^{\mathrm{loc}}\rho_t\\
        &+
        R(\mathbf y;\theta)
        \nabla_\theta\log P_\theta(\mathbf y\mid x)
    \Bigg].
    \label{eq:supp_log_derivative}
\end{split}
\end{equation}
By Eq.~\eqref{eq:supp_padded_factorization},
\begin{equation}
    \nabla_\theta\log P_\theta(\mathbf y\mid x)
    =
    \sum_{u=1}^{H}
    \nabla_\theta\log\pi_\theta(y_u\mid s_u).
\end{equation}
The trajectory-dependent component is therefore
\begin{equation}
    \sum_{u=1}^{H}\sum_{t=1}^{H}
    \rho_t\,
    \nabla_\theta\log\pi_\theta(y_u\mid s_u).
    \label{eq:supp_double_sum}
\end{equation}

Fix $t\leq u$. Because $\rho_t$ is $\mathcal F_t$-measurable and
$\mathcal F_t\subseteq\mathcal F_u$, it can be taken outside the
conditional expectation over $y_u$. The score has zero conditional
mean even after conditioning on the privileged information $z$, since
$y_u$ is sampled from $\pi_\theta(\cdot\mid s_u)$. Hence, by the tower
property,
\begin{equation}
\begin{split}
    &\mathbb E\!\left[
        \rho_t
        \nabla_\theta\log\pi_\theta(y_u\mid s_u)
    \right]\\
    &\quad=
    \mathbb E\!\left[
        \rho_t\,
        \mathbb E\!\left[
            \nabla_\theta\log\pi_\theta(y_u\mid s_u)
            \,\middle|\,
            \mathcal F_u
        \right]
    \right]
    =0.
    \label{eq:supp_score_cancellation}
\end{split}
\end{equation}
Only terms with $t>u$ remain in
Eq.~\eqref{eq:supp_double_sum}; their inner sum is
$G^\rho_{u+1}$. Substitution into
Eq.~\eqref{eq:supp_log_derivative} proves
Eq.~\eqref{eq:supp_fixed_horizon_decomposition}.
\end{proof}

\paragraph{Specialization and padded positions.}
Taking $\rho_t=d_t$ recovers Proposition~\ref{prop:exact_gradient_decomposition}. The
causal cancellation above also applies to the clipped signal $r_t$
defined below because it is determined by $(s_t,z)$ before $y_t$ is
sampled. For padded positions, both the local signal and the score are
zero by construction, so the result is unaffected by the amount of
padding between $T$ and $H$.

\subsection{Clipped Local Signal}
\label{app:clipped_signal}

Let
\begin{equation}
    \ell_{t,v}
    =
    \pi_t^{\mathrm T}(v)
    \log
    \frac{\pi_t^{\mathrm T}(v)}
         {\pi_t^{\mathrm S}(v)}
\end{equation}
be the vocabulary-level summand of the forward KL, so that
$d_t=\sum_{v\in\mathcal A}\ell_{t,v}$. DASH applies the same
pointwise upper clipping used in the main text:
\begin{equation}
    r_t
    =
    \sum_{v\in\mathcal A}
    \min(\ell_{t,v},\tau).
    \label{eq:supp_clipped_signal}
\end{equation}
Thus, $r_t\leq d_t$ and $r_t\leq\tau|\mathcal A|$. Because individual
forward-KL summands can be negative, the upper-clipped sum is not
guaranteed to be non-negative. This does not affect its use as the
centered gate signal or as a local differentiable loss: $r_t$ is
$\mathcal F_t$-measurable, and the usual automatic-differentiation
subgradient is used at the clipping threshold.

\subsection{Aggregation Coefficients and Supervision Horizons}
\label{app:coefficients}

The DASH recursion
\begin{equation}
    A_T=r_T,
    \qquad
    A_t=r_t+\lambda_tA_{t+1}
\end{equation}
unrolls as
\begin{equation}
    A_t
    =
    \sum_{k=t}^{T}
    \left(
        \prod_{j=t}^{k-1}\lambda_j
    \right)r_k,
    \label{eq:supp_unrolled_aggregate}
\end{equation}
where an empty product equals one. Summing over all starting positions
and exchanging the order of summation gives
\begin{equation}
    \mathcal L_{\mathrm{DASH}}
    =
    \frac{1}{T}\sum_{k=1}^{T}c_kr_k,
    \qquad
    c_k
    =
    \sum_{t=1}^{k}
    \prod_{j=t}^{k-1}\lambda_j.
    \label{eq:supp_coefficient_closed_form}
\end{equation}
Separating the $t=k$ term yields the recursion stated in the main
paper:
\begin{equation}
    c_1=1,
    \qquad
    c_k=1+\lambda_{k-1}c_{k-1}.
\end{equation}
Since each sigmoid gate satisfies $0<\lambda_j<1$,
\begin{equation}
    1\leq c_k\leq k.
\end{equation}
For a constant gate $\lambda$, the exact finite-length expression is
\begin{equation}
    c_k=\frac{1-\lambda^k}{1-\lambda}.
\end{equation}

The effective supervision horizon beginning at position $t$ is the
total propagation mass assigned to the current and later positions:
\begin{equation}
    h_t
    =
    \sum_{k=t}^{T}
    \prod_{j=t}^{k-1}\lambda_j.
    \label{eq:supp_effective_horizon}
\end{equation}
For a constant gate,
\begin{equation}
    h_t
    =
    \frac{1-\lambda^{T-t+1}}{1-\lambda},
\end{equation}
which approaches $1/(1-\lambda)$ as the remaining rollout length grows.
Under DASH, $\lambda_t=
\operatorname{sg}[\sigma(-\kappa(r_t-\bar r))]$, so a below-average
signal leaves the boundary after position $t$ more open, whereas an
above-average signal restricts propagation across that boundary. The
coefficient and horizon views satisfy
\begin{equation}
    \sum_{k=1}^{T}c_k
    =
    \sum_{t=1}^{T}h_t,
\end{equation}
because both sides sum the same gate products over all pairs
$1\leq t\leq k\leq T$.

\subsection{Gradient of the Implemented DASH Objective}
\label{app:dash_gradient}

The gate construction is detached:
\begin{equation}
    \lambda_t
    =
    \operatorname{sg}\!\left[
        \sigma\!\left(-\kappa(r_t-\bar r)\right)
    \right],
    \qquad
    \bar r=\frac{1}{T}\sum_{t=1}^{T}r_t.
\end{equation}
The sequence mean and all subsequent operations used to construct
$\lambda_t$ and $c_k$ therefore carry no gradient. Conditional on a
sampled rollout, the implemented backward pass is
\begin{equation}
    \nabla_\theta\mathcal L_{\mathrm{DASH}}
    =
    \frac{1}{T}\sum_{k=1}^{T}
    c_k\,\nabla_\theta^{\mathrm{loc}}r_k.
    \label{eq:supp_dash_gradient}
\end{equation}
Thus, DASH performs sequence-conditioned coefficient allocation over
direct local-distillation gradients. The mean $\bar r$ couples
positions when the detached coefficients are constructed, but it does
not introduce cross-position gradients through the gates.

\subsection{Relation to the Implemented Length Normalization}
\label{app:length_normalization}

Proposition~\ref{prop:supp_fixed_horizon} deliberately concerns the
fixed-horizon surrogate $\mathcal J_H^\rho$, whereas the implemented
losses normalize each sampled rollout by its realized length $T$. For
a fixed rollout, replacing $1/H$ with $1/T$ rescales all local
coefficients by the same positive scalar. It therefore preserves both
the uniform within-rollout coefficient profile of vanilla OPSD and the
relative DASH profile $(c_1,\ldots,c_T)$. We do not claim that the
gradient of the variable-length expected objective is identical to
Eq.~\eqref{eq:supp_fixed_horizon_decomposition}.

\subsection{Scope of the Decomposition}
\label{app:scope}

The second term in
Eq.~\eqref{eq:supp_fixed_horizon_decomposition} is a score-function
trajectory term weighted by a sum of subsequent local signals. DASH
does not estimate this term, introduce score-function gradients, or
perform future-to-past credit assignment. The coefficient families are
also structurally different: $G^\rho_{u+1}$ aggregates signals after
position $u$ and multiplies the score term, whereas $c_k$ is determined
by the preceding gate path and multiplies the direct local-distillation
gradient at position $k$. The decomposition is used only to motivate
order-dependent coefficient allocation within the differentiable local
distillation channel; it does not imply that DASH recovers the omitted
trajectory gradient.

\section{Additional Reproducibility Details}
\label{sec:reproducibility}

This section makes explicit which results are externally reported,
which are produced by our reruns, how each comparison method is
instantiated, and how checkpoints are selected. It then records the
remaining implementation details recovered from the saved run
artifacts and launch scripts.

\subsection{Result Provenance and Comparison Scope}
\label{sec:supp_provenance}

Table~\ref{tab:supp_result_provenance} separates external reference
results from our reruns. Base, SFT, and GRPO are taken from Zhao et
al.~\cite{zhao2026selfdistilled} and are marked with $\dagger$ in
Table~\ref{tab:main_results}. These external values are not subjected
to our 200-step budget or checkpoint-selection rule. All remaining
entries in Table~\ref{tab:main_results} are produced by us.

\begin{table}[!htbp]
\centering
\scriptsize
\setlength{\tabcolsep}{3.5pt}
\begin{tabular}{@{}lll@{}}
\toprule
\textbf{Method} & \textbf{Source} & \textbf{Training runs} \\
\midrule
Base & Zhao et al. & Not applicable \\
SFT, GRPO & Zhao et al. & As reported \\
OPSD & Our rerun & $s\in\{0,1,2,3\}$ \\
EOPD & Our rerun & One ($s=0$) \\
AVSD & Our rerun & One ($s=0$) \\
PW-OPSD & Our rerun & One ($s=0$) \\
DASH & Our rerun & $s\in\{0,1,2,3\}$ \\
DASH ablations & Our reruns & $s\in\{0,1,2,3\}$ \\
\bottomrule
\end{tabular}
\caption{Sources and training repetitions of the compared results.}
\label{tab:supp_result_provenance}
\end{table}

The four-seed OPSD and DASH means reduce dependence on any single
training initialization. They are not interpreted as statistical significance
tests against EOPD, AVSD, or PW-OPSD, which are single-run reruns, or
against the external Base, SFT, and GRPO results. Accordingly, claims
involving these methods refer to the compared or displayed results
rather than to statistically significant differences.

\subsection{Baseline Instantiations}
\label{sec:supp_baseline_instantiations}

For OPSD, EOPD, PW-OPSD, and DASH, the student receives only the
problem and generates its own on-policy rollout. The privileged teacher
uses the same base network with the LoRA adapter disabled, additionally
conditions on the reference solution (including the final answer), and
scores the student-generated tokens by teacher forcing on that rollout.
These methods share the same base checkpoint, data preprocessing,
student rollout configuration, and privileged-teacher context in our
reruns, while retaining their method-specific objectives.

OPSD uniformly averages the local forward-KL signals. EOPD instantiates
its entropy-aware objective on this matched privileged-OPSD backbone;
it therefore isolates the transfer of the EOPD objective rather than
reproducing the external-teacher architecture used in the original
paper. PW-OPSD retains its predefined position-dependent weights.
DASH replaces uniform or predefined weighting with the proposed
sequence-conditioned multi-step aggregation.

AVSD retains its method-defining multi-view teacher construction rather
than using the single reference-conditioned teacher above. Its teacher
views are constructed from the full solution, a partial solution, and
the final answer. Apart from this method-specific construction and
objective, our AVSD run uses the same base model, training data,
200-step outer budget, checkpoint-selection rule, and final evaluation
procedure as the other reruns.

\subsection{Training Budget and Checkpoint Selection}
\label{sec:supp_checkpoint_selection}

Every method rerun by us is trained for the complete 200 optimization
steps. We save checkpoints every 20 steps, giving the candidate set
\begin{equation}
    \mathcal C=\{20,40,\ldots,200\}.
\end{equation}
Fix a method and model scale. For training seed $s$, candidate
checkpoint $t\in\mathcal C$, and benchmark $b$, let $m_{s,t,b}$ denote
the corresponding Avg@12 score. We define the selection score
\begin{equation}
    a_{s,t}
    =
    \frac{1}{3}
    \sum_{b\in\mathcal B}m_{s,t,b},
    \qquad
    \mathcal B=\{\mathrm{AIME24},\mathrm{AIME25},\mathrm{HMMT25}\},
    \label{eq:supp_checkpoint_score}
\end{equation}
and select
\begin{equation}
    t_s^\star
    =
    \arg\max_{t\in\mathcal C}a_{s,t}.
    \label{eq:supp_checkpoint_selection}
\end{equation}
All three benchmark scores for seed $s$ come from this single selected
checkpoint:
\begin{equation}
    m_{s,b}=m_{s,t_s^\star,b}.
    \label{eq:supp_selected_benchmark_score}
\end{equation}
For a single-run rerun, $s=0$ and the selected scores are reported
directly without averaging across training runs. Because
$\mathcal B$ is also the final benchmark set, this is a
best-within-200-step reporting protocol, not held-out validation
selection.

\paragraph{Adapters and optimization.}
For the matched privileged-OPSD runs, the adapter configuration is
identical at all three model scales.
LoRA is applied to the attention projections \texttt{q\_proj},
\texttt{k\_proj}, \texttt{v\_proj}, and \texttt{o\_proj}, and to the
MLP projections \texttt{gate\_proj}, \texttt{up\_proj}, and
\texttt{down\_proj}. The effective scaling is $\alpha/r=2.0$ and the
adapter dropout is 0.05. For causal language modeling, bias,
rank-stabilized LoRA, DoRA, and additional trainable modules are
disabled. The $A$ matrix uses Kaiming
initialization and $B$ is initialized to zero. The base model,
embeddings, and language-model head remain frozen.

We use fused AdamW with $\beta_1=0.9$, $\beta_2=0.999$,
$\epsilon=10^{-8}$, and zero weight decay. The learning rate decays
linearly from the value stated in the main text to zero over the fixed
optimization horizon, without warmup. Although the trainer stores
\texttt{num\_train\_epochs=30}, the step limit is reached first and
therefore determines the training duration. Training uses bfloat16
mixed precision, DeepSpeed ZeRO-2 with optimizer-state offloading,
gradient checkpointing, and FlashAttention-2. The maximum combined
sequence length is 20,000 tokens. The privileged-teacher forward pass
is evaluated with the adapter disabled, so only the student LoRA path
receives parameter updates.

\paragraph{Batch decomposition and runtime.}
Table~\ref{tab:supp_batch_runtime} combines the scale-specific DASH batch
decomposition and runtime information that previously required two
separate tables. All three configurations realize the shared effective
batch size of 64.

\begin{table}[!htbp]
\centering
\footnotesize
\setlength{\tabcolsep}{3.5pt}
\begin{tabular}{@{}lcc@{}}
\toprule
\textbf{Model} & \textbf{Micro/accum./GPU} & \textbf{Hardware; time} \\
\midrule
Qwen3-1.7B & $4/2/8$  & $8\times$ A800; $\sim$45 min \\
Qwen3-4B   & $2/4/8$  & $8\times$ A800; $\sim$2.3 h \\
Qwen3-8B   & $1/16/4$ & $4\times$ A800; $\sim$15 h \\
\bottomrule
\end{tabular}
\caption{DASH batch decomposition and approximate runtime. The middle
column reports per-device batch size, gradient accumulation, and GPU
count; runtime includes scheduler and queue overhead.}
\label{tab:supp_batch_runtime}
\label{tab:supp_batch_decomposition}
\label{tab:supp_runtime}
\end{table}

All DASH runs use NVIDIA A800-SXM4-80GB GPUs. On-policy rollout generation
dominates runtime. Relative to vanilla OPSD, DASH requires no additional
teacher or student forward pass; its extra scalar backward scan accounts
for less than 1\% of the step time in matched run timestamps.

\paragraph{Matched OPSD-family prompts.}
For OPSD, EOPD, PW-OPSD, and DASH, the student receives only the
problem, rendered with the Qwen3 chat
template using \texttt{enable\_thinking=True} and
\texttt{add\_generation\_prompt=True}; no additional instruction is
appended. It then generates its own on-policy rollout. The privileged
teacher uses the same base network with the LoRA adapter disabled. It
receives the problem and reference solution, including the final
answer, followed by the two instruction strings below, and scores the
student-generated tokens by teacher forcing on the same rollout.

\paragraph{Reference-analysis instruction.}
\begin{lstlisting}
The reference reasoning above arrives at the correct answer. Please analyze this solution and explain the key reasoning steps and problem-solving strategies employed. Do NOT use <think> tags. Do NOT derive your own solution. Simply analyze and explain the reference solution provided above.
\end{lstlisting}

\paragraph{Transition to independent derivation.}
\begin{lstlisting}
After reading the reference solution above, make sure you truly understand the reasoning behind each step — do not copy or paraphrase it. Now, using your own words and independent reasoning, derive the same final answer to the problem above. Think step by step, explore different approaches, and don't be afraid to backtrack or reconsider if something doesn't work out:
\end{lstlisting}

In these matched OPSD-family runs, the privileged reference is never
provided to the student during
rollout generation or benchmark evaluation. Training sequences use
right padding.

\paragraph{Answer extraction and equivalence checking.}
The reported benchmark scores use a symbolic verifier. For each model
response, the evaluator extracts the last occurrence of
\verb|\boxed{...}|. If the extracted prediction contains no math
delimiters, it is wrapped in dollar signs. The prediction and ground
truth are parsed with
\texttt{math\_verify.parse(..., fallback\_mode="no\_fallback")}, and
equivalence is checked with a five-second timeout using
\texttt{math\_verify.verify}. If parsing or verification raises an
exception, the evaluator falls back to exact comparison after removing
dollar signs and spaces and converting both strings to lowercase.

The optional outcome-level extension uses a separate reward-time
grader. It extracts the last boxed answer, applies the EleutherAI-style
\texttt{strip\_string} normalization, and checks exact string equality.
The normalization canonicalizes common forms involving fractions,
square roots, delimiters, units, percentages, degree symbols, spaces,
and simple decimal or division expressions. This grader is used only by
the optional outcome-level training objective, not for any reported
benchmark score.

\section{Four-Seed Results and Statistical Robustness}
\label{sec:four_seed_results}

The matched OPSD configurations, the main DASH configurations, and all
DASH ablations are trained with four random seeds,
\(s\in\{0,1,2,3\}\). For benchmark \(b\), let \(m_{s,b}\) denote the
Avg@12 score from the checkpoint selected for seed \(s\) by
Eq.~\eqref{eq:supp_checkpoint_selection}. We report the arithmetic mean
and sample standard deviation
\begin{equation}
    \mu_b=\frac{1}{4}\sum_{s=0}^{3}m_{s,b},
    \qquad
    \operatorname{Std}_b
    =
    \sqrt{
        \frac{1}{3}
        \sum_{s=0}^{3}(m_{s,b}-\mu_b)^2
    }.
    \label{eq:supp_seed_statistics}
\end{equation}
The four training seeds are distinct from the 12 independently sampled
responses used to compute Avg@12 for each problem.

The OPSD and DASH entries in Table~\ref{tab:main_results} are four-seed
means. In particular, the Qwen3-1.7B OPSD mean
$(55.60,40.80,29.20)$, with macro-average $41.87$, is our matched rerun
and serves as the common OPSD anchor in all Qwen3-1.7B ablations. It is
not the OPSD value reported by Zhao et al.~\cite{zhao2026selfdistilled}.

\begin{table}[!htbp]
\centering
\footnotesize
\setlength{\tabcolsep}{3.7pt}
\begin{tabular}{@{}lccc@{}}
\toprule
\textbf{Benchmark} & \textbf{1.7B} & \textbf{4B} & \textbf{8B} \\
\midrule
AIME 2024 & $58.30\pm0.90$ & $77.20\pm0.70$ & $78.90\pm0.90$ \\
AIME 2025 & $45.80\pm1.20$ & $71.10\pm0.80$ & $71.40\pm1.00$ \\
HMMT 2025 & $31.10\pm0.30$ & $46.70\pm0.70$ & $48.90\pm1.30$ \\
\midrule
Macro avg. & 45.07 & 65.00 & 66.40 \\
\bottomrule
\end{tabular}
\caption{Four-seed DASH results. Benchmark rows report mean $\pm$
sample standard deviation over seeds $0,1,2,3$ ($n=4$); the last row is
the arithmetic mean of the three benchmark means.}
\label{tab:supp_four_seed_results}
\end{table}

The benchmark-level standard deviations range from 0.30 to 1.30 points
across the three model scales. The means in
Table~\ref{tab:supp_four_seed_results} are those reported in the main
paper. We do not derive a standard deviation for the macro-average from
the three marginal standard deviations because the benchmark scores are
paired within each training seed. The four-seed means reduce dependence
on any single training initialization; they are not used to claim
statistical significance against the single-run or externally reported
baselines.

\section{Complete Ablation Results}
\label{sec:complete_ablations}

This section expands the ablations summarized in the main text with
complete benchmark-level results and the additional gate-signal study.
Unless otherwise specified, all variants use Qwen3-1.7B and the same
training, checkpoint-selection, and evaluation protocol. Each entry,
including the OPSD configuration obtained with $\lambda=0$, is the mean
Avg@12 score over training seeds \(0,1,2,3\); the final column is the
unweighted mean of the three benchmark scores. Thus, the OPSD reference
of 41.87 is our matched four-seed rerun rather than an external result.
A24, A25, and H25 abbreviate AIME 2024, AIME 2025, and HMMT February
2025.

\paragraph{Propagation coefficient and gate direction.}
To separate multi-step aggregation from sequence-adaptive propagation,
we replace the token-wise gates with a constant \(\lambda\) shared by
all positions and rollouts; \(\lambda=0\) recovers vanilla OPSD. The
inverse-gap control instead preserves the sequence-relative gap,
sigmoid gate, sensitivity, and backward recursion but reverses the sign
of the gate input:
\begin{equation}
    \lambda_t^{\mathrm{Inv}}
    =
    \operatorname{sg}\!\left[
        \sigma\!\left(+\kappa(r_t-\bar r)\right)
    \right].
\end{equation}

\begin{table}[!htbp]
\centering
\footnotesize
\setlength{\tabcolsep}{2.8pt}
\begin{tabular}{@{}lcccc@{}}
\toprule
\textbf{Setting} & \textbf{A24} & \textbf{A25} & \textbf{H25} & \textbf{Avg.} \\
\midrule
OPSD ($\lambda=0$) & 55.60 & 40.80 & 29.20 & 41.87 \\
Fixed $\lambda=0.1$ & 58.10 & 44.20 & 28.60 & 43.63 \\
Fixed $\lambda=0.3$ & 58.10 & 42.20 & 29.40 & 43.23 \\
Fixed $\lambda=0.5$ & 55.60 & 42.20 & 30.30 & 42.70 \\
Fixed $\lambda=0.7$ & 56.40 & 41.90 & 29.40 & 42.57 \\
Fixed $\lambda=0.9$ & 56.70 & 41.90 & 28.90 & 42.50 \\
Inverse-gap & 56.90 & 41.90 & 27.50 & 42.10 \\
\textbf{DASH} & \textbf{58.30} & \textbf{45.80} & \textbf{31.10} & \textbf{45.07} \\
\bottomrule
\end{tabular}
\caption{Four-seed means for fixed-coefficient and gate-direction
controls; $\lambda=0$ is the matched OPSD reference.}
\label{tab:supp_propagation_controls}
\label{tab:supp_fixed_lambda}
\label{tab:supp_gate_direction}
\end{table}

Every fixed coefficient improves over OPSD in macro-average;
\(\lambda=0.1\) is strongest among them at 43.63. Inverse-gap reaches
42.10, whereas DASH reaches 45.07, supporting sequence-conditioned
propagation and the selected gap-to-gate direction.

\paragraph{Coefficient allocation, average scale, and gate signal.}
Table~\ref{tab:supp_allocation_signal} separates relative coefficient
allocation from average scale and then varies the propagation-gate
signal. Soft-OR smoothly combines the entropy and divergence-gap gates;
the OPSD and DASH rows in the first panel serve as shared anchors.

\begin{table}[!htbp]
\centering
\footnotesize
\setlength{\tabcolsep}{2.5pt}
\begin{tabular}{@{}lcccc@{}}
\toprule
\textbf{Variant} & \textbf{A24} & \textbf{A25} & \textbf{H25} & \textbf{Avg.} \\
\midrule
\multicolumn{5}{@{}l}{\textit{Coefficient allocation and scale}} \\
OPSD & 55.60 & 40.80 & 29.20 & 41.87 \\
Scale-matched OPSD & 56.40 & 41.90 & 29.40 & 42.57 \\
Normalized DASH & 57.80 & 44.40 & 30.60 & 44.27 \\
DASH & 58.30 & 45.80 & 31.10 & 45.07 \\
\addlinespace[2pt]
\multicolumn{5}{@{}l}{\textit{Propagation-gate signal}} \\
Entropy ($\kappa=1$) & \textbf{58.60} & 45.60 & 28.90 & 44.37 \\
Entropy ($\kappa=2$) & 55.80 & 42.50 & 30.30 & 42.87 \\
Entropy ($\kappa=5$) & 54.70 & 43.90 & 27.50 & 42.03 \\
Soft-OR ($\kappa=5$) & 56.90 & 41.90 & \textbf{32.50} & 43.77 \\
\bottomrule
\end{tabular}
\caption{Four-seed means for coefficient-allocation controls and
alternative gate signals. OPSD and DASH are the shared anchors.}
\label{tab:supp_allocation_signal}
\label{tab:supp_scale_allocation}
\label{tab:supp_gate_signals}
\end{table}

Dynamic allocation improves macro-average by 2.40 points at the OPSD
scale and 2.50 at the DASH scale; increasing scale alone contributes
0.70 and 0.80 points. Entropy and Soft-OR gates also improve over OPSD
for suitable settings, while the divergence-gap anchor is strongest.
Entropy changes from 44.37 at \(\kappa=1\) to 42.03 at \(\kappa=5\).

\paragraph{Sensitivity, divergence geometry, and vocabulary support.}
Table~\ref{tab:supp_local_designs} collects the remaining choices. The
OPSD and DASH rows are shared anchors; tail rows merge teacher mass
outside the retained top-$k$ vocabulary.

\begin{table}[!htbp]
\centering
\footnotesize
\setlength{\tabcolsep}{2.6pt}
\begin{tabular}{@{}lcccc@{}}
\toprule
\textbf{Setting} & \textbf{A24} & \textbf{A25} & \textbf{H25} & \textbf{Avg.} \\
\midrule
\multicolumn{5}{@{}l}{\textit{Shared anchors}} \\
OPSD & 55.60 & 40.80 & 29.20 & 41.87 \\
\textbf{DASH} & \textbf{58.30} & \textbf{45.80} & \textbf{31.10} & \textbf{45.07} \\
\addlinespace[2pt]
\multicolumn{5}{@{}l}{\textit{Propagation sensitivity (alternatives to $\kappa=5$)}} \\
$\kappa=1$  & \textbf{58.60} & 42.20 & 29.20 & 43.33 \\
$\kappa=2$  & 57.50 & 43.10 & 28.30 & 42.97 \\
$\kappa=10$ & 55.80 & 43.60 & 29.20 & 42.87 \\
$\kappa=20$ & \textbf{58.60} & 43.10 & 29.40 & 43.70 \\
\addlinespace[2pt]
\multicolumn{5}{@{}l}{\textit{Local divergence (alternatives to forward KL)}} \\
Symmetric JSD & 51.90 & 38.10 & 24.70 & 38.23 \\
Reverse KL & 54.70 & 41.10 & 28.60 & 41.47 \\
\addlinespace[2pt]
\multicolumn{5}{@{}l}{\textit{Vocabulary support (alternatives to full vocabulary)}} \\
Top-100 $+$ tail & 58.10 & 45.00 & 30.00 & 44.37 \\
Top-1 $+$ tail & 48.10 & 33.60 & 22.80 & 34.83 \\
\bottomrule
\end{tabular}
\caption{Four-seed means for propagation sensitivity, divergence, and
vocabulary support. DASH denotes $\kappa=5$, forward KL, and full
vocabulary.}
\label{tab:supp_local_designs}
\label{tab:supp_kappa}
\label{tab:supp_divergence}
\label{tab:supp_vocabulary}
\end{table}

All sensitivities exceed OPSD, with \(\kappa=5\) strongest at 45.07.
Forward KL exceeds symmetric JSD and reverse KL by 6.84 and 3.60
macro-average points. Top-100 support is only 0.70 points below the full
vocabulary and remains 2.50 above OPSD; top-1 support falls to 34.83.

\section{Compatibility with Outcome-Level RL}
\label{sec:outcome_optimization}

We optionally add a GRPO correctness term (weight \(\eta\), \(K\)
rollouts) to DASH; \(\gamma=\lambda=0.95\) aggregation affects only the
GRPO path. Results are four-seed Avg@12 means. Aggregation improves all
four matched 1.7B settings (Table~\ref{tab:supp_grpo_grid}). The strongest
hybrid, \(\eta=0.3,K=8\), obtains
$(57.80,45.00,31.70;44.83)$ on A24, A25, H25, and average, versus
$(58.30,45.80,31.10;45.07)$ for DASH. At 8B, the same hybrid reaches
$(80.00,73.30,48.60;67.30)$ versus DASH's
$(78.90,71.40,48.90;66.40)$, so compatibility is scale-dependent.

\begin{table}[!htbp]
\centering
\scriptsize
\setlength{\tabcolsep}{5pt}
\begin{tabular}{@{}ccccc@{}}
\toprule
\(\boldsymbol{\eta}\) & \(\boldsymbol{K}\) & \textbf{No agg.} & \textbf{Agg.} & \textbf{Gain} \\
\midrule
0.3 & 4 & 42.80 & 43.17 & +0.37 \\
0.3 & 8 & 43.13 & \textbf{44.83} & +1.70 \\
0.5 & 4 & 42.50 & 42.97 & +0.47 \\
0.5 & 8 & 42.50 & 44.50 & \textbf{+2.00} \\
\bottomrule
\end{tabular}
\caption{Four-seed Qwen3-1.7B macro-averages for GRPO-path aggregation.}
\label{tab:supp_outcome_combined}
\label{tab:supp_grpo_grid}
\end{table}

\end{document}